\numberwithin{equation}{section}
\newcounter{counter} \numberwithin{counter}{section}
\newtheorem{theorem}[counter]{Theorem}
\newtheorem{definition}[counter]{Definition}
\newtheorem{assumption}[counter]{Assumption}
\newtheorem{lemma}[counter]{Lemma}
\newtheorem{corollary}[counter]{Corollary}
\newtheorem{remark}[counter]{Remark}
\newcommand{\ReLU}{\operatorname{ReLU}}
\newcommand{\Lip}{\operatorname{Lip}}
\newcommand{\req}[1]{Eq.\,(\ref{#1})}
\title{Variational Representations and Neural Network Estimation of R{\'e}nyi Divergences}
\author{
  Jeremiah Birrell\\
  Department of Mathematics and Statistics\\
  University of Massachusetts Amherst\\
  Amherst, MA 01003,  USA \\
  \texttt{birrell@math.umass.edu} \\
   \And
 Paul Dupuis\\
  Division of Applied Mathematics\\
  Brown University\\
  Providence, RI 02912, USA \\
  \texttt{dupuis@dam.brown.edu} \\
  \And
    Markos A. Katsoulakis\\
    Department of Mathematics and Statistics\\
  University of Massachusetts Amherst\\
  Amherst, MA 01003,  USA \\
  \texttt{markos@math.umass.edu} \\
\And
    Luc Rey-Bellet\\
    Department of Mathematics and Statistics\\
  University of Massachusetts Amherst\\
  Amherst, MA 01003,  USA \\
  \texttt{luc@math.umass.edu} \\
  \And
    Jie Wang\\
    Department of Mathematics and Statistics\\
  University of Massachusetts Amherst\\
  Amherst, MA 01003,  USA \\
  \texttt{wang@math.umass.edu} \\
}
\begin{document}

\maketitle

\begin{abstract}
We derive a new variational formula for the R{\'e}nyi family of divergences, $R_\alpha(Q\|P)$, between probability measures $Q$ and $P$. Our result generalizes the classical Donsker-Varadhan variational formula for  the Kullback-Leibler divergence. We further show that this R{\'e}nyi variational formula holds over a range of function spaces; this  leads to a formula for the optimizer under very weak assumptions and is also key in our development of a   consistency theory for R{\'e}nyi divergence estimators.  By applying this theory to neural-network estimators,   we show that if a neural network family satisfies one of several strengthened versions of the universal approximation property  then the corresponding R{\'e}nyi divergence estimator is  consistent. In contrast to density-estimator  based methods, our estimators involve only expectations under $Q$ and $P$ and hence are more effective in high dimensional systems.  We illustrate this via several numerical examples of neural network estimation in systems of up to 5000 dimensions.
\end{abstract}
\keywords{R{\'e}nyi divergence, variational representation, neural network estimator}

\section{Introduction}
Information-theoretic divergences  are widely used to quantify the notion of `distance' between probability measures $Q$ and $P$; commonly used examples include the Kullback-Leibler divergence (i.e., KL-divergence or relative entropy), $f$-divergences, and R{\'e}nyi divergences.  The computation and estimation of divergences is important in many  applications, including independent component analysis \cite{hyvarinen2004independent}, medical image registration \cite{Maes}, feature selection \cite{1114861}, genomic clustering \cite{Butte}, the information bottleneck method \cite{info_bottleneck}, independence testing \cite{Kinney3354}, and in the analysis and design of generative adversarial networks (GANs) \cite{GAN,f_GAN,WGAN,10.5555/3295222.3295327, CumulantGAN:Pantazisetal}. 

Estimation of divergences from data is known to be a difficult problem \cite{doi:10.1162/089976603321780272,pmlr-v38-gao15}. Density-estimator  based methods such as those in \cite{Poczos:2011:NDE:3020548.3020618,NIPS2015_5911} are known to work best in low dimensions. However, recent work has shown that   variational representations of divergences can be used to construct  statistical estimators for the KL-divergence \cite{MINE_paper}, and more general $f$-divergences  \cite{Nguyen_Full_2010,Ruderman,birrell2020optimizing}, that scale better with dimension. The family of R\'{e}nyi divergences, first introduced in \cite{renyi1961measures}, provide  means of quantifying the discrepancy between two probability measures that are  especially sensitive to the relative tail behavior of the distributions. R{\'e}nyi divergences are used in variational inference \cite{li2016renyi},  uncertainty quantification for rare events \cite{10.1214/19-AAP1468},  and naturally arise in coding theory and hypothesis testing  (see \cite{van2014renyi} for further discussion and references). R{\'e}nyi divergences have several advantages over the commonly-used KL-divergence, including the ability to compare heavy-tailed distributions and certain non-absolutely continuous distributions. In addition, the estimation of KL-divergence can suffer from stability issues, due to the impact of rare events as well as high variance \cite{song2020understanding}, problems that we empirically find to be less pronounced for certain R{\'e}nyi divergences (see the example in Section \ref{sec:Renyi_HD_example} below). In this work we develop a new  variational characterization for the  family of R{\'e}nyi divergences, $R_\alpha(Q\|P)$, and study its use in statistical estimation. More specifically, we will prove 
\begin{align}\label{eq:renyi_preview}
&R_\alpha(Q\|P)=\sup_{g\in \Gamma}\!\left\{\frac{1}{\alpha-1}\log\!\left[\int e^{(\alpha-1)g}dQ\right]-\frac{1}{\alpha}\log\!\left[\int e^{\alpha g}dP\right]\right\}\,,\
\end{align}
where $\alpha\in\mathbb{R}$, $\alpha\neq 0,1$, and $\Gamma$ is an appropriate function space; see Theorem \ref{thm:gen_DV}  below.  \req{eq:renyi_preview} can be viewed as an extension   of the well-known Donsker-Varadhan variational formula  for the relative entropy \cite{DV1983, Dupuis_Ellis},
\begin{align}\label{eq:DV_var}
R(Q\|P)=\sup_{g\in \mathcal{M}_b({\Omega})}\left\{\int g dQ-\log\!\left[\int e^g dP\right]\right\},
\end{align}
where 
$\mathcal{M}_b({\Omega})$ denotes the set of bounded  measurable real-valued functions on $\Omega$. 
Note that \eqref{eq:renyi_preview}  generalizes \eqref{eq:DV_var} in two directions; we generalize both the divergence, $R(Q\|P)\to R_\alpha(Q\|P)$, and the function space,  $\mathcal{M}_b(\Omega)\to \Gamma$; allowed $\Gamma$'s are given in Theorem \ref{thm:gen_DV}, Corollary \ref{cor:g_star}, and Lemma \ref{lemma:Renyi_var_Phi} below. The flexibility in choosing $\Gamma$ allows us to derive a formula for the optimizer of \eqref{eq:renyi_preview}   under very weak assumptions (see Corollary \ref{cor:g_star}) and is also key in our development of consistent statistical estimators (see Lemma \ref{lemma:Renyi_var_Phi}  and Theorem \ref{thm:consistency}).

 The objective functional in the optimization problem \eqref{eq:renyi_preview} depends  on $Q$ and $P$ only through the expectation of certain functions of $g$. As a result, the objective functional can be estimated in a straightforward manner using only samples from $Q$ and $P$. This property makes \eqref{eq:renyi_preview} a powerful tool in the construction of statistical estimators for R{\'e}nyi divergences. In Section \ref{sec:consistency} we provide a general framework for proving consistency of   R{\'e}nyi divergence estimators that are  based on \eqref{eq:renyi_preview}.  In Section \ref{sec:NN_estimation} we apply this theory to show consistency of neural-network estimators.   Related methods were used to prove consistency of KL-divergence estimators in \cite{MINE_paper}, though under stronger assumptions. Here we contribute a set of new technical tools that allow for a consistency proof in important cases where the prior theory did not apply, specifically when the measures $Q$ and $P$ have non-compact support, are light-tailed, and for neural-network estimators with unbounded activation function, such as the widely-used ReLU activation. Our new method involves the use of the Tietze extension theorem and new strengthened versions of the universal approximation property (see Definitions \ref{def:Psi_bounded_approx} and \ref{def:Psi_bounded_L1_approx}) to vary the function  space, $\Gamma$, in the variational formula \eqref{eq:renyi_preview} (see Lemma \ref{lemma:Renyi_var_Phi}) and finally culminates in the consistency result, Theorem \ref{thm:consistency}.  Function spaces of neural-networks that satisfy the required assumptions are provided in Section \ref{sec:NN_estimation} and are discussed further in Section \ref{sec:NN_cons_proofs}.  Finally, in Section \ref{sec:numerical} we demonstrate the effectiveness of the R{\'e}nyi-divergence estimators   in numerical examples with  systems of up to 5000 dimensions.

\subsection{Related Work}
Our main result \eqref{eq:renyi_preview} can be viewed as a dual variational formula to the result in \cite{atar2015robust},  generalizing the duality between the Donsker-Varadhan and Gibbs variational principles.  An alternative variational formula for the R{\'e}nyi divergences, using an objective functional that is  a linear combination of relative entropies, can be found in Theorem 30 of \cite{van2014renyi} and also in Theorem 1 of \cite{8006657}.  As discussed above,  our result \eqref{eq:renyi_preview} is advantageous for the purpose of statistical estimation, as the objective functional is straightforward to estimate using only samples from $P$ and $Q$. This property was key in the use of \req{eq:DV_var} for the statistical estimation of KL-divergence and  applications to GANs in \cite{MINE_paper} and we will similarly take advantage of this property for R{\'e}nyi divergence estimation. In addition, our results on neural network estimation in Section \ref{sec:consistency}  provide theoretical underpinnings for   Cumulant GAN \cite{CumulantGAN:Pantazisetal}. Finally, we note that a variational formula for quantum R{\'e}nyi entropies was previously derived in \cite{Berta2017} and agrees with \eqref{eq:renyi_preview} in the commutative, discrete setting.

\section{Background on R{\'e}nyi Divergences}
The R{\'e}nyi divergence of order $\alpha\in(0,\infty)$, $\alpha\neq 1$, between two probability measures $Q$ and $P$ on a measurable space $(\Omega,\mathcal{M})$, denoted $R_\alpha(Q\|P)$, can be defined as follows: Let $\nu$ be a sigma-finite positive measure with  $dQ=qd\nu$ and $dP=pd\nu$. Then
\begin{align}\label{eq:Renyi_formula}
&R_\alpha(Q\|P)=\begin{cases} 
     \frac{1}{\alpha(\alpha-1)}\log\!\left[\int_{p>0} q^\alpha p^{1-\alpha} d\nu\right] &\begin{array}{l}\text{if }0<\alpha<1 \text{ or }\\
\alpha>1 \text{ and } Q\ll P \end{array}\\
       +\infty &\text{ if }\alpha>1\text{ and } Q\not\ll P.
         \end{cases}
\end{align}
Such a $\nu$ always exists (e.g., $\nu=Q+P$) and it can be shown that the definition \eqref{eq:Renyi_formula} does not depend on the choice of $\nu$. The $R_\alpha$ satisfy the following divergence property: $R_\alpha(Q\|P)\geq 0$ with equality if and only if $Q=P$.   In this sense, the R{\'e}nyi divergences provide a notion of `distance' between probability measures.  Note, however, that  R{\'e}nyi divergences are not symmetric, but rather they satisfy
\begin{align}\label{eq:R_neg_alpha}
R_\alpha(Q\|P)=R_{1-\alpha}(P\|Q),\,\,\,\,\,\alpha\in(0,1).
\end{align}
\req{eq:R_neg_alpha} is used to extend the definition of $R_\alpha(Q\|P)$ to $\alpha<0$.  R{\'e}nyi divergences are connected to the KL-divergence, $R(Q\|P)$, through the following limiting formulas:
\begin{align}
&\lim_{\alpha\to 1^-}\! R_\alpha(Q\|P)=R(Q\|P)
\end{align}
and if $R(Q\|P)=\infty$ or if $R_\beta(Q\|P)<\infty$ for some $\beta>1$ then
\begin{align}
&\lim_{\alpha\to 1^+} \!R_\alpha(Q\|P)=R(Q\|P).
\end{align}
See \cite{van2014renyi} for a detailed discussion of R{\'e}nyi divergences and proofs of these (and many other) properties. Note, however, that our definition of the R{\'e}nyi divergences is related to theirs by $D_\alpha(\cdot\|\cdot)=\alpha R_\alpha(\cdot\|\cdot)$.  Explicit formulas for the R{\'e}nyi divergence between members of many common parametric families can be found in \cite{GIL2013124}. R{\'e}nyi divergences are also connected with the family of $f$-divergences; see  \cite{1705001}.

\section{Variational Formula for the R{\'e}nyi Divergences}

The key result in the paper is the following  variational characterization of the R{\'e}nyi divergences, which generalizes the Donsker-Varadhan variational formula \eqref{eq:DV_var}.  The proof of  this theorem can  be found in Section \ref{app:gen_DV}.
\begin{theorem}[R{\'e}nyi-Donsker-Varadhan Variational Formula]\label{thm:gen_DV}
Let  $P$ and $Q$ be probability measures on $(\Omega,\mathcal{M})$ and $\alpha\in\mathbb{R}$, $\alpha\neq 0,1$. Then for any set of functions, $\Gamma$, with $\mathcal{M}_b(\Omega)\subset\Gamma\subset\mathcal{M}(\Omega)$ (where $\mathcal{M}(\Omega)$ denotes the set of all real-valued measurable functions on $\Omega$) we have
\begin{align}\label{eq:Renyi_var}
&R_\alpha(Q\|P)=\sup_{g\in \Gamma}\!\left\{\frac{1}{\alpha-1}\log\!\left[\int e^{(\alpha-1)g}dQ\right]-\frac{1}{\alpha}\log\!\left[\int e^{\alpha g}dP\right]\right\},
\end{align}
where we interpret $\infty-\infty\equiv-\infty$ and $-\infty+\infty\equiv-\infty$.  

If in addition $(\Omega,\mathcal{M})$ is a metric space with the Borel $\sigma$-algebra then \req{eq:Renyi_var} holds for all $\Gamma$ that satisfy $\Lip_b(\Omega)\subset\Gamma\subset\mathcal{M}(\Omega)$, where $\Lip_b(\Omega)$ denotes the space of bounded Lipschitz functions on $\Omega$ (we emphasize that the Lipschitz constant is allowed to take any finite value).
\end{theorem}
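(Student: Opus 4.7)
The plan is to obtain the identity by sandwiching the supremum in \eqref{eq:Renyi_var} between a Hölder upper bound and an explicit truncation-based approximating sequence, then to upgrade the test-function class from $\mathcal{M}_b(\Omega)$ to $\Lip_b(\Omega)$ in the metric-space setting by an $L^1$-approximation argument using Lusin and Tietze.

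First I would establish the upper bound $F_\alpha(g) \leq R_\alpha(Q\|P)$ for every $g \in \mathcal{M}(\Omega)$, where $F_\alpha(g)$ denotes the objective in \eqref{eq:Renyi_var}. With $q,p$ densities against a common dominating $\nu$, the identity
\[
q^\alpha p^{1-\alpha} = (q e^{(\alpha-1)g})^\alpha (p e^{\alpha g})^{1-\alpha}
\]
(the exponential factors cancel) sets up Hölder's inequality. For $0 < \alpha < 1$, direct Hölder with conjugate exponents $1/\alpha$ and $1/(1-\alpha)$ yields $\int q^\alpha p^{1-\alpha} d\nu \leq (\int e^{(\alpha-1)g} dQ)^\alpha (\int e^{\alpha g} dP)^{1-\alpha}$; taking logarithms and dividing by $\alpha(\alpha-1) < 0$ flips the inequality and gives $F_\alpha(g) \leq R_\alpha$. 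For $\alpha > 1$ with $Q \ll P$, write $u := q e^{(\alpha-1)g} = (u^\alpha v^{1-\alpha})^{1/\alpha} v^{(\alpha-1)/\alpha}$ with $v := p e^{\alpha g}$ and apply Hölder with exponents $\alpha$ and $\alpha/(\alpha-1)$ (both $>1$) to obtain the reverse integral inequality $\int q^\alpha p^{1-\alpha} d\nu \geq (\int e^{(\alpha-1)g}dQ)^\alpha / (\int e^{\alpha g}dP)^{\alpha-1}$, and divide by $\alpha(\alpha-1) > 0$. The case $\alpha > 1$ with $Q \not\ll P$ is trivial since $R_\alpha = \infty$. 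The case $\alpha < 0$ is reduced to $1-\alpha > 1$ via \eqref{eq:R_neg_alpha}, after checking directly that $F_\alpha(g;Q,P) = F_{1-\alpha}(-g;P,Q)$.

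Next I would show the supremum is attained in the limit over $\mathcal{M}_b(\Omega)$. The formal optimizer is $g^* = \log(dQ/dP)$: substitution yields $F_\alpha(g^*) = \bigl(\tfrac{1}{\alpha-1} - \tfrac{1}{\alpha}\bigr)\log\int q^\alpha p^{1-\alpha} d\nu = R_\alpha$. To stay inside $\mathcal{M}_b$, truncate to $g_n = (g^* \wedge n) \vee (-n)$. Split each of the integrals $\int e^{(\alpha-1)g_n}dQ$ and $\int e^{\alpha g_n}dP$ over $\{g^* \geq n\}$, $\{|g^*| < n\}$, $\{g^* \leq -n\}$: the middle piece converges monotonically to $\int q^\alpha p^{1-\alpha} d\nu$, while the boundary pieces vanish by using the pointwise bounds $q \leq p\, e^{g^*}$ and $p \leq q\, e^{-g^*}$ together with the finiteness of $\int q^\alpha p^{1-\alpha}d\nu$ — the signs of $\alpha-1$ and $\alpha$ determine which tail dominates. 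When $R_\alpha = \infty$ with $Q\not\ll P$ (forcing $\alpha>1$), the sequence $g_t = t\, \mathbf{1}_A$ for any $A$ with $Q(A)>0$ and $P(A)=0$ sends $F_\alpha \to \infty$; when $R_\alpha = \infty$ with $Q\ll P$, the same truncation argument together with monotone convergence forces $F_\alpha(g_n) \to \infty$ at the appropriate rate.

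Finally, for the metric-space statement, I would upgrade $\mathcal{M}_b \subset \Gamma$ to $\Lip_b \subset \Gamma$ by showing that every $g \in \mathcal{M}_b(\Omega)$ is the $(Q+P)$-a.e.\ limit of a uniformly bounded sequence in $\Lip_b$. Given such $g$, apply Lusin's theorem to the finite Borel measure $Q+P$ to obtain closed sets $K_\epsilon$ with $(Q+P)(K_\epsilon^c) < \epsilon$ on which $g|_{K_\epsilon}$ is continuous, extend this restriction by the Tietze extension theorem to a bounded continuous $\tilde g_\epsilon$ preserving the sup-norm, and then regularize via inf-convolution $g_{\epsilon,L}(x) = \inf_{y}[\tilde g_\epsilon(y) + L\, d(x,y)]$, which is $L$-Lipschitz, satisfies $\|g_{\epsilon,L}\|_\infty \leq \|g\|_\infty$, and converges pointwise to $\tilde g_\epsilon$ as $L \to \infty$. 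A diagonal selection over $(\epsilon_k, L_k)$ produces a uniformly bounded sequence in $\Lip_b$ converging $(Q+P)$-a.e.\ to $g$; the bounded convergence theorem applied to $e^{(\alpha-1)g_n}$ and $e^{\alpha g_n}$ then gives $F_\alpha$-convergence, matching the suprema. I expect the main technical obstacle to be the tail analysis of the truncations in the achievability step: the sign of $\alpha-1$ governs the direction of the required domination, and the degenerate cases ($R_\alpha = \infty$, $\int q^\alpha p^{1-\alpha} d\nu = 0$, or densities vanishing on large sets) together with the conventions $\infty - \infty \equiv -\infty$ and $-\infty + \infty \equiv -\infty$ must be handled on a case-by-case basis.
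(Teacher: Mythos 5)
Your proposal is correct and follows essentially the same route as the paper: a Hölder-based upper bound, achievability via bounded truncations of the formal optimizer $\log(dQ/dP)$ with the $\alpha<0$ case reduced through \eqref{eq:R_neg_alpha}, and the Lusin--Tietze plus Lipschitz-regularization argument for the $\Lip_b(\Omega)$ statement. The only notable (and valid) deviation is that you apply Hölder directly to arbitrary $g\in\mathcal{M}(\Omega)$, which folds the paper's separate truncation-and-limit step extending the upper bound from $\mathcal{M}_b(\Omega)$ to $\mathcal{M}(\Omega)$ into the initial bound, at the cost of the case-by-case bookkeeping with the conventions $\infty-\infty\equiv-\infty$ and $-\infty+\infty\equiv-\infty$ that you already flag.
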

\begin{corollary}[Existence of an Optimizer]\label{cor:g_star}
Let $\alpha\in\mathbb{R}$,  $\alpha\neq 0,1$, and suppose $Q\ll P$, $dQ/dP>0$, $(dQ/dP)^\alpha\in L^1(P)$. Define  $g^*=\log(dQ/dP)$ and suppose $\Gamma$ is a function space that satisfies $g^*\in \Gamma\subset \mathcal{M}(\Omega)$.   Then \req{eq:Renyi_var} holds and the supremum is achieved at $g^*$.
\end{corollary}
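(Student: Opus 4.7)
The plan is to prove the corollary by (i) directly evaluating the variational objective at $g^\ast$ and showing it equals $R_\alpha(Q\|P)$, and then (ii) sandwiching the supremum over the given $\Gamma$ between $F(g^\ast)$ from below and the supremum over an enlarged function space (to which Theorem \ref{thm:gen_DV} applies) from above. Let me write $F(g)=\frac{1}{\alpha-1}\log\!\int e^{(\alpha-1)g}dQ-\frac{1}{\alpha}\log\!\int e^{\alpha g}dP$ for brevity.

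For step (i), I would substitute $g^\ast=\log(dQ/dP)$ directly. Using $e^{(\alpha-1)g^\ast}=(dQ/dP)^{\alpha-1}$ and $dQ=(dQ/dP)\,dP$, the first expectation becomes $\int(dQ/dP)^{\alpha-1}dQ=\int(dQ/dP)^{\alpha}dP$, while the second expectation is $\int e^{\alpha g^\ast}dP=\int(dQ/dP)^{\alpha}dP$ as well. Thus both $\log$ terms involve the same quantity $\int(dQ/dP)^{\alpha}dP$, and collecting the coefficients gives $F(g^\ast)=\frac{1}{\alpha(\alpha-1)}\log\!\int(dQ/dP)^{\alpha}dP$, which is finite by the integrability hypothesis $(dQ/dP)^\alpha\in L^1(P)$. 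I would then identify this with $R_\alpha(Q\|P)$ via the definition \eqref{eq:Renyi_formula} taken with $\nu=P$: since $Q\ll P$ with $dQ/dP>0$, the set $\{p>0\}$ is all of $\Omega$ up to a $P$-null set, and the formula is immediate for $\alpha>0$, $\alpha\ne 1$. For $\alpha<0$, the positivity of $dQ/dP$ implies $P\ll Q$, and applying \eqref{eq:R_neg_alpha} together with the same computation for $R_{1-\alpha}(P\|Q)$ yields the same expression, so $F(g^\ast)=R_\alpha(Q\|P)$ in all cases.

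For step (ii), the issue is that $\Gamma$ is only assumed to contain $g^\ast$ and not $\mathcal{M}_b(\Omega)$, so Theorem \ref{thm:gen_DV} cannot be applied to $\Gamma$ itself. I would instead introduce $\widetilde{\Gamma}:=\Gamma\cup \mathcal{M}_b(\Omega)$, which satisfies $\mathcal{M}_b(\Omega)\subset\widetilde{\Gamma}\subset\mathcal{M}(\Omega)$, so by Theorem \ref{thm:gen_DV} we have $R_\alpha(Q\|P)=\sup_{g\in\widetilde{\Gamma}}F(g)$. Since $g^\ast\in\Gamma\subset\widetilde{\Gamma}$, we obtain the chain
\begin{equation*}
R_\alpha(Q\|P)\;=\;\sup_{g\in\widetilde{\Gamma}}F(g)\;\ge\;\sup_{g\in \Gamma}F(g)\;\ge\;F(g^\ast)\;=\;R_\alpha(Q\|P),
\end{equation*}
forcing equality throughout. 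This simultaneously proves \eqref{eq:Renyi_var} for $\Gamma$ and shows that the supremum is attained at $g^\ast$.

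There is no real obstacle here: the heart of the argument is the algebraic cancellation in step (i), which is made possible by the careful choice of the two differing exponents $\alpha$ and $\alpha-1$ in the variational formula, and the enlargement trick in step (ii) is a standard device. The only point requiring a bit of care is handling $\alpha<0$ via \eqref{eq:R_neg_alpha}, and verifying that $(dQ/dP)^\alpha\in L^1(P)$ guarantees both logarithms in $F(g^\ast)$ are finite (the integrals are positive since $dQ/dP>0$).
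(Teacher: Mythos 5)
Your proposal is correct and follows essentially the same route as the paper: a direct evaluation showing $F(g^\ast)=\frac{1}{\alpha(\alpha-1)}\log\int(dQ/dP)^\alpha dP=R_\alpha(Q\|P)$ (with $\nu=P$ in \eqref{eq:Renyi_formula} and \eqref{eq:R_neg_alpha} handling $\alpha<0$), combined with Theorem \ref{thm:gen_DV} applied to a superset of $\Gamma$ to bound the supremum from above. Your enlargement $\widetilde{\Gamma}=\Gamma\cup\mathcal{M}_b(\Omega)$ is just a cosmetic variant of the paper's implicit use of $\Gamma=\mathcal{M}(\Omega)$ for that upper bound.
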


The ability to vary the function space in \eqref{eq:Renyi_var} has several important consequences.
\begin{enumerate}
\item Taking $\Gamma=\mathcal{M}(\Omega)$, or some other appropriate set of unbounded functions, implies that one can use  unbounded activation functions (e.g., ReLU) in neural-network estimators of R{\'e}nyi divergences; see Section \ref{sec:NN_estimation}.
\item  For certain activation functions, taking $\Gamma=\Lip_b(\Omega)$  is key to proving the consistency of neural-network estimators based on \eqref{eq:Renyi_var}; see the third example in Section \ref{sec:NN_estimation} along with Section \ref{sec:NN_cons_proofs}.
\item  The ability to consider unbounded functions allows for existence of an optimizer under very general assumptions; see Corollary \ref{cor:g_star}. In some cases, the existence of an optimizer can be used to reduce the optimization to a finite dimensional problem; see Section \ref{sec:exp_family} below.
\end{enumerate} 

One can formally obtain the classical Donsker-Varadhan variational formula \eqref{eq:DV_var}  by  letting $\Gamma=\mathcal{M}_b(\Omega)$ and  taking $\alpha\to 1$ in \req{eq:Renyi_var}. Similarly, taking $\alpha\to 0$ and reindexing $g\to -g$ one obtains the Donsker-Varadhan variational formula for $R(P\|Q)$.  Rigorously, the extension of the Donsker-Varadhan variational formula to $\Gamma$ with $\mathcal{M}_b(\Omega)\subset\Gamma\subset\mathcal{M}(\Omega)$ follows from \req{eq:DV_var} together with Theorem 1 in  \cite{MINE_paper}. The generalization  to $\Lip_b(\Omega)\subset\Gamma\subset\mathcal{M}(\Omega)$ can be proven via the same method we use for R{\'e}nyi divergences (see \req{eq:Cb_reduction1} - \eqref{eq:Cb_reduction3} and the surrounding discussion). This is a new result to the best of our knowledge; we omit the details.

\begin{remark}
Note that the conventions regarding infinities in Theorem \ref{thm:gen_DV} are simply  convenient  short-hands that allow us to consider arbitrary unbounded functions.  If one wishes to avoid infinities in the objective functional then the optimization can  be restricted to 
\begin{align}
    \widetilde{\Gamma}\equiv\{g\in \Gamma:  \exp((\alpha-1)g)\in L^1(Q),\,\exp(\alpha g)\in L^1(P)\}
\end{align}
and the equality \eqref{eq:Renyi_var} will still hold.
\end{remark}

\subsection{ Variational Formula for the R{\'e}nyi Divergences: Exponential Families}\label{sec:exp_family}
 If $P$ and $Q$ are members of a  parametric family then, by using the formula for the optimizer $g^*=\log(dQ/dP)$, the function space $\Gamma$ can be further reduced  to a finite dimensional manifold of functions (here we assume the conditions from Corollary \ref{cor:g_star} that ensure the existence of $g^*$). In particular,  if $P=\mu_{\theta_p}$ and $Q=\mu_{\theta_q}$ are members of the same exponential family  $d\mu_\theta=h(x)e^{\kappa(\theta)\cdot T(x)-\beta(\theta)}\mu(dx),\,\,\,\theta\in\Theta$,  with $T:\Omega\to\mathbb{R}^k$  the vector of sufficient statistics and $\mu$ a $\sigma$-finite positive measure, then  the optimizer $g^*$ lies in the $(k+1)$-dimensional subspace of functions
 \begin{align}
   g_{(\Delta \kappa,\Delta\beta)}\equiv\Delta \kappa\cdot T-\Delta\beta\,,\,\,\,\, (\Delta \kappa,\Delta\beta)\in\mathbb{R}^{k+1}\,.
 \end{align}
 Computation of the R{\'e}nyi divergence therefore reduces to the following $k$-dimensional optimization problem (note that the R{\'e}nyi objective functional is  invariant under shifts, and so the $\Delta\beta$ terms cancel):
\begin{align}\label{eq:Renyi_manifold}
R_\alpha(Q\|P)=&\sup_{\Delta \kappa\in\mathbb{R}^{k} }\left\{\frac{1}{\alpha-1}\log\int e^{(\alpha-1)\Delta \kappa\cdot T}dQ-\frac{1}{\alpha}\log\int e^{\alpha \Delta \kappa\cdot T}dP\right\}\,.
\end{align}
Contrast this with an alternative parametric approach, wherein one estimates  $\theta_p$ and $\theta_q$ using maximum likelihood estimation and then uses the explicit formula  for the R{\'e}nyi divergence between members of an exponential family found in Chapter 2 in \cite{liese1987convex},
\begin{align}\label{eq:Renyi_exp_family_explicit}
R_\alpha(Q\|P)=\frac{1}{\alpha(\alpha-1)}\log\left(\frac{Z(\alpha\theta_q+(1-\alpha)\theta_p)}{Z(\theta_p)^{1-\alpha}Z(\theta_q)^\alpha}\right)\,,\,\,\,  \alpha>0\,,\,\,\alpha\neq 1\,,
\end{align}
where $Z(\theta)\equiv\exp(\beta(\theta))=\int h(x)e^{\kappa(\theta)\cdot T(x)}\mu(dx)$ is the partition function. Using \eqref{eq:Renyi_exp_family_explicit} to estimate the R{\'e}nyi divergence from data requires the solution of two optimization problems (one each to find maximum likelihood estimators for $\theta_q$ and $\theta_p$) and then the computation of three partition functions.  Even if one uses a more sophisticated method such as thermodynamic integration (see \cite{leli2010free}) to compute the partition functions in \eqref{eq:Renyi_exp_family_explicit}, there is still the challenge of generating data from $\mu_{\alpha\theta_q+(1-\alpha)\theta_p}$, which is required to address the  partition function in the numerator of \eqref{eq:Renyi_exp_family_explicit}.  These challenges are absent  when using  \eqref{eq:Renyi_manifold}, which only requires the solution of one optimization problem and can be estimated directly using samples from $Q$ and $P$; one does not need to generate samples from any auxiliary distribution. Therefore, we only expect \eqref{eq:Renyi_exp_family_explicit} to be preferable in simpler cases where the partition function can be computed analytically.  We illustrate the use of \eqref{eq:Renyi_manifold} to estimate R{\'e}nyi divergences in Section \ref{sec:exp_family_ex}.

\section{Statistical Estimation of R{\'e}nyi Divergences}\label{sec:consistency} We now discuss how the variational formula \eqref{eq:Renyi_var} can be used to construct statistical estimators for R{\'e}nyi divergences. The estimation of divergences in high dimensions is a difficult but important problem, e.g., for independence testing \cite{Kinney3354} and the development of GANs \cite{GAN,f_GAN,WGAN,10.5555/3295222.3295327, CumulantGAN:Pantazisetal}.  Density-estimator  based methods   for estimating divergences are known to be  effective primarily  in low-dimensions   (see \cite{Poczos:2011:NDE:3020548.3020618,NIPS2015_5911} as well as Figure 1 in \cite{MINE_paper} and further references therein). In contrast,  variational methods for  KL and $f$-divergences have proven   effective in a range of medium and high-dimensional systems \cite{MINE_paper,birrell2020optimizing}.  It should be noted that  high-dimensional problems still pose a considerable challenge in general; this is due in part to the problem of sampling rare events. However, existing Monte Carlo methods for sampling rare events (see, e.g., \cite{rubino2009rare,bucklew2013introduction,budhiraja2019analysis}) are  still applicable here.

The variational formula \eqref{eq:Renyi_var}  naturally suggests estimators of the form
\begin{align}\label{eq:renyi_est_def}
    \widehat{R}^{n,k}_\alpha(Q\|P)
    \equiv&\sup_{\phi\in\Phi_k}\left\{\frac{1}{\alpha-1}\log\!\left[\int e^{(\alpha-1)\phi}dQ_n\right]-\frac{1}{\alpha}\log\!\left[\int e^{\alpha \phi}dP_n\right]\right\}\,,
\end{align}
where $\Phi_k$ is an appropriate  family of functions (e.g., a neural network family) and $Q_n$, $P_n$ are the empirical measures constructed from $n$ independent samples from $Q$ and $P$ respectively. Note that there are two levels of approximation here: we approximate the measures $Q\approx Q_n$, $P\approx P_n$, and we approximate the function space $\Gamma\approx \Phi_k$, with the approximations becoming arbitrarily good (in the appropriate senses) as $n,k\to\infty$.  In Theorem \ref{thm:consistency}  below we will give a consistency result for \eqref{eq:renyi_est_def}; under appropriate assumptions we will show that for all $\delta>0$ there exists $K\in\mathbb{Z}^+$ such that for all $k\geq K$ we have
\begin{align}
\lim_{n\to\infty}\mathbb{P}\left( \left|R_\alpha(Q\|P)-\widehat{R}_\alpha^{n,k}(Q\|P)\right|\geq\delta\right)=0\,.
\end{align}

Theorem \ref{thm:gen_DV} implies that the $\Phi_k$ are allowed to contain unbounded functions, an important point for practical computations. In addition, note the objective functional in \req{eq:renyi_est_def} only involves the values of $\phi$ at the sample points; there is no need to estimate the likelihood ratio $dQ/dP$.   In contrast, estimators of the form \eqref{eq:renyi_est_def} perform well in high dimensions, as we demonstrate below in Section \ref{sec:numerical}.

\subsection{Neural Network Estimators For R{\'e}nyi Divergences}\label{sec:NN_estimation}
While we will provide a general consistency theory for the estimator \eqref{eq:renyi_est_def} in Section \ref{sec:gen_consistency},  we are primarily interested in neural-network estimators on $\Omega=\mathbb{R}^m$, i.e., where the $\Phi_k$ in \eqref{eq:renyi_est_def} are  neural network families. By a neural network family, we mean a collection of functions, $\phi:\mathbb{R}^m\to\mathbb{R}$ (here, $\mathbb{R}^m$ is called the input layer and $\mathbb{R}$ the output layer) that are constructed as follows: First  compose some number, $d$, of hidden layers of the form $\sigma_j\circ B_{j-1}$, where $B_{j-1}:\mathbb{R}^{m_{j-1}}\to\mathbb{R}^{m_{j}}$ is affine ($m_0\equiv m$) and $\sigma_j:\mathbb{R}^{m_j}\to\mathbb{R}^{m_j}$ is a (nonlinear) activation function.   Then finish by composing with a final affine map $B_d:\mathbb{R}^{m_d}\to\mathbb{R}$. Often, the   $\sigma_j$'s are defined by applying a nonlinear function  $\sigma:\mathbb{R}\to\mathbb{R}$ to each of the $m_j$ components; in such a case, we will call $\sigma$ the activation function. The parameters of the neural network consist of the (weight) matrices and  shift (i.e., bias) vectors from all  affine transformations used in the construction (for technical reasons, we will assume that the set of allowed weights and biases is closed). The number of hidden layers is called the depth of the network and the dimension of each layer is called its width.

As we will see in Theorem \ref{thm:consistency} below, consistency of the estimator \eqref{eq:renyi_est_def}  will rely on the ability of  $\Phi\equiv \cup_k\Phi_k$ to approximate $\Gamma=\Lip_b(\mathbb{R}^m)$ in the appropriate sense. Neural networks are well suited for this task, as they   satisfy various  versions of the   universal approximation property.  The two most common variants are:
\begin{enumerate}
    \item[a.]  For all $g\in C(\mathbb{R}^m)$, all $\epsilon>0$, and all compact $K\subset\mathbb{R}^m$  there exists $\phi\in\Phi$ such that \begin{align}\label{eq:univ_approx_K}
    \sup_{x\in K}|g(x)-\phi(x)|<\epsilon\,.
\end{align}
\item[b.]  Let  $p\in[1,\infty)$. For all $g\in L^p(\mathbb{R}^m)$ and all $\epsilon>0$   there exists $\phi\in\Phi$ such that \begin{align}\label{eq:univ_approx_L1}
   \int_{\mathbb{R}^m}|g(x)-\phi(x)|^pdx<\epsilon\,.
\end{align}
\end{enumerate}
For example,  under suitable assumptions the family of (shallow) arbitrary width neural networks  satisfies \eqref{eq:univ_approx_K} \cite{Cybenko1989,pinkus_1999}. Results for deep networks with bounded width are also known; see \cite{pmlr-v125-kidger20a} for \req{eq:univ_approx_K} and \cite{10.5555/3295222.3295371,park2021minimum} for \req{eq:univ_approx_L1}.  Here we will only work with neural networks consisting of continuous functions, i.e., those with continuous activation functions; this is true of most activation functions used in practice. 

We will prove that consistency of a neural-network estimator follows from one of several strengthened versions of the universal approximation property; we introduce these in Definitions \ref{def:Psi_bounded_approx} and \ref{def:Psi_bounded_L1_approx} below.  Before presenting  these details, we first give three classes of networks to which our consistency result (Theorem~\ref{thm:consistency}) will apply; proofs that all  required assumptions   are satisfied  can be found in Section \ref{sec:NN_cons_proofs}.
\begin{enumerate}
    \item Measures with compact support: Let $\Omega\subset\mathbb{R}^m$ be compact, $\Phi$ be a family of neural networks that satisfy the universal approximation property \eqref{eq:univ_approx_K}, and let $\Phi_k\subset\Phi$ be the set of networks with depth and width bounded by $k$ and with parameter values restricted to  $[-a_k,a_k]$, where $a_k\nearrow\infty$. Then  the estimator \eqref{eq:renyi_est_def} is consistent.

    \item Non-compact support, bounded Lipschitz activation functions: Let $\Omega=\mathbb{R}^m$ and $\Phi$ be the family of neural networks with 2 hidden layers, arbitrary width, and activation function $\sigma:\mathbb{R}\to\mathbb{R}$. Let $\Phi_k\subset\Phi$ be the set of width-$k$ networks with parameter values restricted to $[-a_k,a_k]$, where $a_k\nearrow\infty$ (this family of networks satisfies \eqref{eq:univ_approx_K}).  If the activation function, $\sigma$,  is bounded and there exists  $(c,d)\subset\mathbb{R}$ on which $\sigma$ is one-to-one and Lipschitz  then  the estimator \eqref{eq:renyi_est_def} is consistent.

    \item Non-compact support, unbounded Lipschitz activation functions: Let $p\in(1,\infty)$ and $\Omega=\mathbb{R}^m$. Let $Q$ and $P$ be probability measures on $\Omega$ with  finite moment generating functions everywhere and with densities $dQ/dx$ and $dP/dx$ that are bounded on compact sets. Let  $\Phi$ be the family of neural networks obtained by using either the ReLU activation function or the GroupSort activation with group size 2 (these satisfy variants of \eqref{eq:univ_approx_K} and \eqref{eq:univ_approx_L1}, see Theorem 1 in \cite{park2021minimum} and Theorem 3 in \cite{pmlr-v97-anil19a} respectively); note that these activations are unbounded, hence in this case it is critical that Theorem \ref{thm:gen_DV} applies to spaces of unbounded functions. Finally, let $\Phi_k\subset\Phi$ be the set of networks with depth and width bounded by $k$ and with parameter values restricted to  $[-a_k,a_k]$, where $a_k\nearrow\infty$.    Then  the estimator \eqref{eq:renyi_est_def} is consistent. For ReLU activations our proof shows that 3 hidden layers is sufficient.    
\end{enumerate}
Note that in all cases, the $\Phi_k$'s are an increasing family of neural networks  with parameter values restricted to an increasing family of compact sets. Similar boundedness assumptions on the network parameters were required    in \cite{MINE_paper}, which studied neural-network estimators for the KL-divergence.  Apart from generalizing to R{\'e}nyi divergences, the primary contributions of the current work are several new approximation results which enable us to consider $Q$ and $P$ with non-compact support as well as unbounded activation functions. In contrast, the consistency result for KL divergence in \cite{MINE_paper} only applies to compactly supported measures (in which case  boundedness of the activation is irrelevant).

\subsection{Consistency of the R{\'e}nyi Divergence Estimators} \label{sec:gen_consistency} Though we are primarily interested in neural-network estimators, we will present our consistency result in terms of abstract requirements on the approximation spaces $\Phi_k$. Intuitively, the basic requirement is that $\Phi\equiv\cup_k\Phi_k$ is `dense' in $\Lip_b(\Omega)$ in the appropriate sense.  More precisely, we will need a space of functions, $\Phi$, that satisfies one of the following strengthened/modified versions of the universal approximation properties from \req{eq:univ_approx_K} and \eqref{eq:univ_approx_L1}:
\begin{definition}\label{def:Psi_bounded_approx}
Let $\Omega$ be a metric space and $\Phi,\Psi\subset \mathcal{M}(\Omega)$. We say that $\Phi$ has the {\bf  $\Psi$-bounded $L^\infty$ approximation property} if the following two properties hold:
\begin{enumerate}
\item For all $\phi\in\Phi$ there exists $\psi\in\Psi$ with $|\phi|\leq\psi$.
\item For all $g\in \Lip_b(\Omega)$  there exists $\psi\in\Psi$ such that:
\begin{enumerate}
    \item $|g|\leq \psi$.
    \item For all   compact $K\subset\Omega$ and all  $\epsilon>0$ there exists $\phi\in\Phi$ with  $|\phi|\leq \psi$ and $\sup_{x\in K}|g(x)-\phi(x)|<\epsilon$.
    \end{enumerate}
\end{enumerate}
\end{definition}
\begin{definition}\label{def:Psi_bounded_L1_approx}
Let $\Omega$ be a metric space, $\mathcal{Q}$ be a collection of Borel probability measures on $\Omega$, and $\Phi,\Psi\subset \mathcal{M}(\Omega)$. Let $p\in[1,\infty)$. We say that $\Phi$ has the {\bf $\Psi$-bounded $L^p(\mathcal{Q})$ approximation property} if the following two properties hold:
\begin{enumerate}
\item For all $\phi\in\Phi$ there exists $\psi\in\Psi$ with $|\phi|\leq\psi$.
\item For all $g\in \Lip_b(\Omega)$  there exists $\psi\in\Psi$ such that:
\begin{enumerate}
    \item $|g|\leq \psi$.
    \item For all   compact $K\subset\Omega$ and all  $\epsilon>0$ there exists $\phi\in\Phi$ with $|\phi|\leq \psi$ and\\ $\sup_{\mu\in\mathcal{Q}}\left(\int_K |g-\phi|^pd\mu\right)^{1/p}<\epsilon$.
    \end{enumerate}
\end{enumerate}
\end{definition}
Intuitively, these definitions state that functions in $\Phi$ are able to approximate bounded Lipschitz functions on compact sets (in some norm), and with the   approximating functions being uniformly bounded on the whole space by some fixed function in $\Psi$.  For the neural network families 1 and 2 of Section \ref{sec:NN_estimation} we will let $\Psi$ be the set of positive constant functions and in case 3 we will let $\Psi=\{x\mapsto a\|x\|+b:a,b\geq 0\}$; see Section \ref{sec:NN_cons_proofs} for  details.

Under appropriate integrability assumptions on $\Psi$, the ability to approximate in either of the above manners allows one to restrict the optimization in \eqref{eq:Renyi_var} to $\Phi$, leading to the following result (the proof can be found in Section \ref{sec:consistency_proof}).
\begin{lemma}\label{lemma:Renyi_var_Phi}
Let $\Omega$ be a complete separable metric space, $Q,P$ be Borel probability measures on $\Omega$, $\alpha\in\mathbb{R}\setminus\{0,1\}$, and  $\Phi,\Psi\subset \mathcal{M}(\Omega)$. Suppose one of the following two collections of properties holds: 
\begin{enumerate}
    \item 
\begin{enumerate}
\item  $\Phi$ has the  $\Psi$-bounded  $L^\infty$ approximation property.
\item $e^{\pm(\alpha-1)\psi}\in L^1(Q)$ for all $\psi\in\Psi$.
\item $e^{\pm\alpha\psi} \in L^1(P)$ for all $\psi\in\Psi$.
\end{enumerate}

    \item There exist conjugate exponents $p,q\in(1,\infty)$ such that:
\begin{enumerate} 
\item   $\Phi$ has the  $\Psi$-bounded $L^p(\mathcal{Q})$ approximation property, where $\mathcal{Q}\equiv\{Q,P\}$.
\item $e^{\pm q(\alpha-1)\psi}\in L^1(Q)$ for all $\psi\in\Psi$.
\item $e^{\pm q\alpha\psi} \in L^1(P)$ for all $\psi\in\Psi$.
\end{enumerate}
\end{enumerate}

 Then
\begin{align}\label{eq:var_formula_Phi}
R_\alpha(Q\|P)= &\sup_{\phi\in\Phi}\left\{\frac{1}{\alpha-1}\log\int e^{(\alpha-1)\phi}dQ-\frac{1}{\alpha}\log\int e^{\alpha \phi}dP\right\}\,.
\end{align}
\end{lemma}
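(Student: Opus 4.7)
The plan is to prove both inequalities in \eqref{eq:var_formula_Phi}. Write $F(\phi)=\frac{1}{\alpha-1}\log\int e^{(\alpha-1)\phi}dQ-\frac{1}{\alpha}\log\int e^{\alpha\phi}dP$ for the R{\'e}nyi objective functional. The upper bound $\sup_{\phi\in\Phi}F(\phi)\le R_\alpha(Q\|P)$ is immediate from Theorem \ref{thm:gen_DV} applied with $\Gamma=\mathcal{M}(\Omega)$, since $\Phi\subset\mathcal{M}(\Omega)$. For the reverse inequality I apply Theorem \ref{thm:gen_DV} with $\Gamma=\Lip_b(\Omega)$ (available since $\Omega$ is a metric space), which reduces the problem to showing that for every $g\in\Lip_b(\Omega)$ there exists a sequence $\phi_n\in\Phi$ with $F(\phi_n)\to F(g)$.

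Given such a $g$, I select $\psi\in\Psi$ dominating $|g|$ as in Definition \ref{def:Psi_bounded_approx}(2) or \ref{def:Psi_bounded_L1_approx}(2). Because $\Omega$ is Polish, Ulam's theorem ensures that $Q$ and $P$ are tight, so I can fix an increasing sequence of compact sets $K_n$ with $Q(K_n^c),P(K_n^c)\to 0$, and for each $n$ the approximation property produces $\phi_n\in\Phi$ with $|\phi_n|\le\psi$ and approximation error (sup-norm in Case 1, or $L^p$-norm uniformly in $\mu\in\{Q,P\}$ in Case 2) at most $1/n$ on $K_n$. The pointwise bounds $|\phi_n|,|g|\le\psi$ combined with the integrability hypotheses on $\Psi$ (via $e^{|\beta|\psi}\le e^{\beta\psi}+e^{-\beta\psi}$) guarantee that the four integrals defining $F(\phi_n)$ and $F(g)$ are finite and strictly positive, so $F$ is well-defined at every point in question.

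To show $F(\phi_n)\to F(g)$ it suffices, by continuity of $\log$ on $(0,\infty)$, to establish $L^1(Q)$ and $L^1(P)$ convergence of the integrands. Using the elementary estimate $|e^a-e^b|\le|a-b|\max(e^a,e^b)$ together with $|\phi_n|,|g|\le\psi$,
\begin{align*}
\bigl|e^{(\alpha-1)\phi_n}-e^{(\alpha-1)g}\bigr|\le|\alpha-1|\,|\phi_n-g|\,e^{|\alpha-1|\psi}.
\end{align*}
The integral over $K_n^c$ is bounded by $2\int_{K_n^c}e^{|\alpha-1|\psi}dQ$, which tends to $0$ by dominated convergence. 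The integral over $K_n$ is handled directly in Case 1 (the sup-norm on $K_n$ tends to $0$ while $\int e^{|\alpha-1|\psi}dQ<\infty$), and in Case 2 via H{\"o}lder's inequality with conjugate exponents $(p,q)$:
\begin{align*}
\int_{K_n}|\phi_n-g|\,e^{|\alpha-1|\psi}dQ\le\|\phi_n-g\|_{L^p(K_n,Q)}\,\|e^{|\alpha-1|\psi}\|_{L^q(Q)},
\end{align*}
where the $L^q$-factor is finite precisely because $e^{\pm q(\alpha-1)\psi}\in L^1(Q)$, and the first factor tends to zero by construction. The identical argument with $\alpha$ in place of $\alpha-1$ handles the $P$ integrals. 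The main obstacle is Case 2: $L^p$ approximation on compact sets does not provide pointwise control, so one cannot apply dominated convergence to the exponentials directly; the mean-value bound combined with H{\"o}lder is exactly what forces the conjugate-exponent integrability conditions $e^{\pm q(\alpha-1)\psi},e^{\pm q\alpha\psi}\in L^1$ in hypothesis 2.
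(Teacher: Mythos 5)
Your proposal is correct and follows essentially the same route as the paper: both reduce the lower bound to approximating each $g\in\Lip_b(\Omega)$ via Theorem \ref{thm:gen_DV} with $\Gamma=\Lip_b(\Omega)$ (and the upper bound via $\Gamma=\mathcal{M}(\Omega)$), use tightness on the Polish space to split integrals into a compact set and its complement, control the compact part by the sup-norm bound (Case 1) or H\"older with exponents $p,q$ (Case 2) together with the mean-value estimate on exponentials, and kill the tail by dominated convergence using the $\Psi$-domination and the integrability hypotheses. The only cosmetic difference is that on the compact set in Case 1 you weight by $e^{|\alpha-1|\psi}$ while the paper uses the bound $|\phi|\leq\|g\|_\infty+1$ there; both are valid.
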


We will be able to prove consistency of the estimator \eqref{eq:renyi_est_def} when the approximation spaces, $\Phi_k$,  increase to a function space, $\Phi$,  that satisfies the assumptions of Lemma \ref{lemma:Renyi_var_Phi}. More specifically (and slightly more generally), we will work under the following set of assumptions.
\begin{assumption}\label{assump:Phi_k} 
Suppose we have $\Phi_k,\Psi\subset \mathcal{M}(\Omega)$ that satisfy the following:
\begin{enumerate}
\item 
\begin{align}\label{eq:k_lim_assump}
R_\alpha(Q\|P)= &\lim_{k\to\infty}\sup_{\phi\in\Phi_k}\left\{\frac{1}{\alpha-1}\log\int e^{(\alpha-1)\phi}dQ-\frac{1}{\alpha}\log\int e^{\alpha \phi}dP\right\}\,.
\end{align}
\item Each  $\Phi_k$ has the form
\begin{align}
\Phi_k=\{\phi_k(\cdot,\theta):\theta\in\Theta_k\}\,,
\end{align}
where $\phi_k:\Omega\times\Theta_k\to\mathbb{R}$ is continuous and $\Theta_k$ is a compact metric space.
\item For each $k$ there exists $\psi_k\in\Psi$ with $\sup_{\theta\in\Theta_k}|\phi_k(\cdot,\theta)|\leq \psi_k$.
\item $e^{\pm(\alpha-1)\psi}\in L^1(Q)$ for all $\psi\in\Psi$.
\item $e^{\pm\alpha\psi} \in L^1(P)$ for all $\psi\in\Psi$.
\end{enumerate}
\end{assumption}
Our primary means of  satisfying the condition \eqref{eq:k_lim_assump} is described in the following  lemma. 
\begin{lemma}\label{lemma:union_Phik}
Suppose $\Phi$ satisfies the assumptions of Lemma \ref{lemma:Renyi_var_Phi}.  Take subsets $\Phi_k\subset\Phi_{k+1}\subset \Phi$, $k\in\mathbb{Z}^+$, with $\cup_k\Phi_k=\Phi$. Then the equality \eqref{eq:k_lim_assump} holds.
\end{lemma}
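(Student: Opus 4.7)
The plan is to combine the variational identity from Lemma \ref{lemma:Renyi_var_Phi} with the monotonicity of suprema over an increasing sequence of sets.  Let me abbreviate the objective functional as
\begin{align*}
F(\phi) \equiv \frac{1}{\alpha-1}\log\int e^{(\alpha-1)\phi}dQ - \frac{1}{\alpha}\log\int e^{\alpha\phi}dP,
\end{align*}
so that the hypothesis of Lemma \ref{lemma:Renyi_var_Phi} being satisfied yields $R_\alpha(Q\|P) = \sup_{\phi\in\Phi} F(\phi)$, and the desired conclusion is $R_\alpha(Q\|P) = \lim_{k\to\infty} a_k$ where $a_k \equiv \sup_{\phi\in\Phi_k} F(\phi)$.

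For the upper bound I would simply note that, since $\Phi_k\subset\Phi_{k+1}\subset\Phi$, the sequence $a_k$ is non-decreasing and each $a_k \leq \sup_{\phi\in\Phi} F(\phi) = R_\alpha(Q\|P)$.  Hence $\lim_{k\to\infty} a_k$ exists in $[-\infty,+\infty]$ and is bounded above by $R_\alpha(Q\|P)$.

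For the matching lower bound, fix any $\phi\in\Phi$.  Since $\Phi = \bigcup_k \Phi_k$, there exists $k_0$ with $\phi \in \Phi_{k_0}$, and by the nesting $\phi \in \Phi_k$ for every $k\geq k_0$.  Therefore $F(\phi) \leq a_k$ for all $k\geq k_0$, which yields $F(\phi) \leq \lim_{k\to\infty} a_k$.  Taking the supremum over $\phi\in\Phi$ and applying Lemma \ref{lemma:Renyi_var_Phi} gives $R_\alpha(Q\|P) \leq \lim_{k\to\infty} a_k$, completing the proof.

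I do not expect any real obstacle here: the two directions are essentially tautological once Lemma \ref{lemma:Renyi_var_Phi} is in hand.  The only mild point worth verifying is that the argument is insensitive to the values $\pm\infty$ — if $R_\alpha(Q\|P)=+\infty$ the lower bound argument still works because for every $M$ there is some $\phi\in\Phi$ with $F(\phi)>M$, and this $\phi$ eventually lies in $\Phi_k$, forcing $a_k>M$ for all large $k$; if the sup is $-\infty$ the statement is trivial.  Thus the lemma reduces entirely to a compatibility check between a nested union and a supremum, with the nontrivial variational content already absorbed into Lemma \ref{lemma:Renyi_var_Phi}.
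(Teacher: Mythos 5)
Your proof is correct: the paper gives no separate argument for this lemma, treating it as immediate from Lemma \ref{lemma:Renyi_var_Phi}, and your monotone-supremum-over-an-increasing-union argument (including the check that the $\pm\infty$ cases cause no trouble) is exactly the intended reasoning.
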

We use  this lemma in the concrete examples in Section \ref{sec:NN_estimation} and the proofs in Section \ref{sec:NN_cons_proofs}. However, we will not directly use Lemma \ref{lemma:union_Phik} in the proof of the  consistency result, Theorem \ref{thm:consistency}; there we will work  under the more general  Assumption \ref{assump:Phi_k}. We now state our consistency result.
\begin{theorem}\label{thm:consistency}
Let $\alpha\in\mathbb{R}\setminus\{0,1\}$, $\Omega$ be a complete separable metric space, $P,Q$ be Borel probability measures on $\Omega$,  and  $X_i,Y_i$, $i\in\mathbb{Z}_+$ be $\Omega$-valued random variables on a probability space $(N,\mathcal{N},\mathbb{P})$. Suppose $X_i$ are iid and $Q$-distributed, $Y_i$ are iid and $P$-distributed, and let $Q_n,P_n$ denote the corresponding $n$-sample empirical measures. Suppose Assumption \ref{assump:Phi_k} holds for the  spaces $\Phi_k,\Psi\subset \mathcal{M}(\Omega)$, $k\in\mathbb{Z}_+$; in particular, the $\Phi_k$'s have the form
\begin{align}
\Phi_k=\{\phi_k(\cdot,\theta):\theta\in\Theta_k\}\,.
\end{align}
Define the corresponding estimator
\begin{align}\label{eq:estimator_k}
   \widehat{R}_\alpha^{n,k}(Q\|P) =\sup_{\theta\in\Theta_k}\left\{\frac{1}{\alpha-1}\log\left[\int  e^{(\alpha-1)\phi_{k,\theta}}dQ_n\right]-\frac{1}{\alpha}\log\left[\int e^{\alpha \phi_{k,\theta}}dP_n\right]\right\}\,.
\end{align}
 \begin{enumerate}
\item If $R_\alpha(Q\|P)<\infty$ then for all $\delta>0$ there exists $K\in\mathbb{Z}^+$ such that for all $k\geq K$ we have
\begin{align}
\lim_{n\to\infty}\mathbb{P}\left( \left|R_\alpha(Q\|P)-\widehat{R}_\alpha^{n,k}(Q\|P)\right|\geq\delta\right)=0\,.
\end{align}
\item If $R_\alpha(Q\|P)=\infty$ then for all $M>0$ there exists $K\in\mathbb{Z}^+$ such that for all $k\geq K$ we have
\begin{align}
\lim_{n\to\infty}\mathbb{P}\left( \widehat{R}_\alpha^{n,k}(Q\|P)\leq M\right)=0\,.
\end{align}

\end{enumerate}
\end{theorem}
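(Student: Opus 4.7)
The plan is to decompose the error into an approximation (bias) part and a statistical part. Define
\begin{align*}
F_k(\theta;\mu,\nu)&\equiv\frac{1}{\alpha-1}\log\!\int e^{(\alpha-1)\phi_{k,\theta}}d\mu-\frac{1}{\alpha}\log\!\int e^{\alpha\phi_{k,\theta}}d\nu,\\
R_\alpha^k&\equiv\sup_{\theta\in\Theta_k} F_k(\theta;Q,P),\qquad \widehat{R}_\alpha^{n,k}=\sup_{\theta\in\Theta_k} F_k(\theta;Q_n,P_n).
\end{align*}
Assumption \ref{assump:Phi_k}(1) already gives $R_\alpha^k\to R_\alpha(Q\|P)$ as $k\to\infty$, so the whole task reduces to showing, for each fixed $k$, that $\widehat{R}_\alpha^{n,k}\to R_\alpha^k$ in probability as $n\to\infty$, and then finishing with a triangle-inequality argument in each of the two cases.

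The heart of the argument is a uniform law of large numbers: for each $c\in\{\alpha-1,\alpha\}$ and each $\mu\in\{Q,P\}$ with empirical version $\mu_n$, I will show
\begin{align*}
\sup_{\theta\in\Theta_k}\left|\int e^{c\phi_{k,\theta}}d\mu_n-\int e^{c\phi_{k,\theta}}d\mu\right|\to 0\quad\text{a.s.}
\end{align*}
Here $(x,\theta)\mapsto e^{c\phi_k(x,\theta)}$ is continuous by Assumption \ref{assump:Phi_k}(2), $\Theta_k$ is compact, and the envelope $e^{c\phi_{k,\theta}(x)}\le e^{|c|\psi_k(x)}$ lies in $L^1(Q)\cap L^1(P)$ by Assumption \ref{assump:Phi_k}(3,4,5) (since $e^{|c|\psi_k}\le e^{c\psi_k}+e^{-c\psi_k}$). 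This is the classical setting for a continuous-parameter ULLN: pointwise SLLN together with a dominated-convergence stochastic-equicontinuity argument yields uniform a.s.\ convergence over the compact set $\Theta_k$. To then lift this through the logarithm I will note that on the true measures the two integrals are sandwiched between $\int e^{-|c|\psi_k}d\mu>0$ and $\int e^{|c|\psi_k}d\mu<\infty$ uniformly in $\theta$, and on the empirical side the lower bound $\int e^{-|c|\psi_k}d\mu_n$ converges a.s.\ to its (strictly positive) limit, so with probability tending to one the arguments of $\log$ lie in a common interval bounded away from $0$ and $\infty$, where $\log$ is Lipschitz. This yields $\sup_{\theta\in\Theta_k}|F_k(\theta;Q_n,P_n)-F_k(\theta;Q,P)|\to 0$ in probability, and taking suprema gives $|\widehat{R}_\alpha^{n,k}-R_\alpha^k|\to 0$ in probability.

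To conclude, in Case 1 ($R_\alpha(Q\|P)<\infty$), given $\delta>0$ I pick $K$ so that $|R_\alpha^k-R_\alpha(Q\|P)|<\delta/2$ for $k\ge K$, and then $\mathbb{P}(|R_\alpha(Q\|P)-\widehat{R}_\alpha^{n,k}|\ge \delta)\le\mathbb{P}(|R_\alpha^k-\widehat{R}_\alpha^{n,k}|\ge \delta/2)\to 0$. In Case 2 ($R_\alpha(Q\|P)=\infty$), given $M>0$ I pick $K$ with $R_\alpha^k>M+1$ for $k\ge K$ (possible by the assumption), and then
\begin{align*}
\mathbb{P}\bigl(\widehat{R}_\alpha^{n,k}\le M\bigr)\le \mathbb{P}\bigl(|R_\alpha^k-\widehat{R}_\alpha^{n,k}|\ge 1\bigr)\to 0.
\end{align*}

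The main obstacle is the uniform LLN step, and in particular verifying it without bounding the approximating functions or the support of $Q,P$: the proof must lean on the envelope $\psi_k\in\Psi$ and the exponential integrability of $\psi_k$ under $Q$ and $P$. A secondary subtlety is the control of the empirical integrals from \emph{below}, which has to be argued pointwise (using the SLLN on $e^{-|c|\psi_k}$) rather than uniformly, before invoking Lipschitzness of $\log$ on the resulting random but eventually-compact interval.
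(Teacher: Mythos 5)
Your proposal is correct and takes essentially the same route as the paper's proof: the same bias/statistical decomposition via $R^k_\alpha$, the same uniform law of large numbers over the compact $\Theta_k$ with the envelope $e^{|c|\psi_k}$ supplying integrability, the same lifting through $\log$ via its Lipschitz property on an interval bounded away from zero, and the same triangle-inequality endgame in both cases. The only superficial differences are that the paper obtains the lower bound on the empirical integrals from the ULLN deviation event itself rather than from a separate SLLN applied to $e^{-|c|\psi_k}$, and that strictly speaking only the pairs $(\alpha-1,Q)$ and $(\alpha,P)$ enjoy (and are all that is needed of) the envelope integrability in Assumption \ref{assump:Phi_k}, whereas your sketch asserts it for all four combinations.
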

The proof of Theorem \ref{thm:consistency}, which can be found in Section \ref{sec:consistency_proof}, is inspired by  the work in \cite{MINE_paper} which used the Donsker-Varadhan variational formula \eqref{eq:DV_var} to  estimate the KL-divergence. However, as mentioned above, we have developed new techniques that  allows us to prove consistency when $Q$ and $P$ to have non-compact support. This is accomplished by introducing  the space $\Psi$ in both Lemma \ref{lemma:Renyi_var_Phi} and Theorem \ref{thm:consistency}, which allows the use of $\phi$'s that are  $\Psi$-bounded, as opposed to simply being bounded.

If $\Theta_k\subset\mathbb{R}^{d_k}\cap\{\theta:\|\theta\|\leq K_k\}$ and $\phi_k$ is bounded by $M_k$ and is $L_k$\,-\,Lipschitz (i.e., Lipschitz continuous with constant $L_k$) in $\theta\in\Theta_k$   then one can derive sample complexity bounds for the estimator \eqref{eq:renyi_est_def}  by using the same technique that was used in \cite{MINE_paper} to study  KL-divergence estimators. To obtain an $\alpha$-divergence estimator  error less than $\epsilon$ with  probability at least $1-\delta$, it is sufficient to have the  number of samples, $n$, satisfy
\begin{align}\label{eq:n_lower_bound}
n\geq \frac{32 D_{\alpha,k}^2}{\epsilon^2}\left(d_k\log(16L_kK_k\sqrt{d_k}/\epsilon)+2d_kM_k\max\{|\alpha|,|\alpha-1|\}+\log(4/\delta)\right)\,,
\end{align}
where $D_{\alpha,k}\equiv\max\{e^{2|\alpha|M_k}/|\alpha|,e^{2|\alpha-1|M_k}/|\alpha-1|\}$. The qualitative behavior of \req{eq:n_lower_bound} in $\epsilon$, $\delta$, and $d_k$ is the same as the KL result from \cite{MINE_paper}, though some modifications to the proof are necessary.  The derivation uses the same techniques as the proof of Theorem 3 in \cite{MINE_paper}.  In particular, it relies on  a combination of concentration inequalities and covering theorems to obtain a non-asymptotic  uniform law of large numbers-type result; see  \cite{vershynin_2018} for  details on these tools. We include a proof of \eqref{eq:n_lower_bound} in Section \ref{sec:complexity}.

\section{Numerical Examples}\label{sec:numerical}

In this section we present several numerical examples of using the estimator \eqref{eq:estimator_k}; in practice, we search for the optimum in \eqref{eq:estimator_k}   via stochastic gradient descent (SGD) \cite{doi:10.1137/120880811,Ghadimi2016,10.5555/3304889.3305071}.  We take the  function space, $\Phi$, to be a neural network  family $\phi_\theta$, $\theta\in\Theta$,  with ReLU activation function, $\sigma(x)=\ReLU(x)\equiv\max\{x,0\}$. We used the AdamOptimizer method  \cite{kingma2014adam,2019arXiv190409237R}, an adaptive learning-rate SGD algorithm, to search for the optimum. All computations were performed in TensorFlow. 

\begin{figure}[ht]
  \centering
  
  \begin{minipage}[b]{0.45\linewidth}
\centering
  \includegraphics[width=\textwidth]{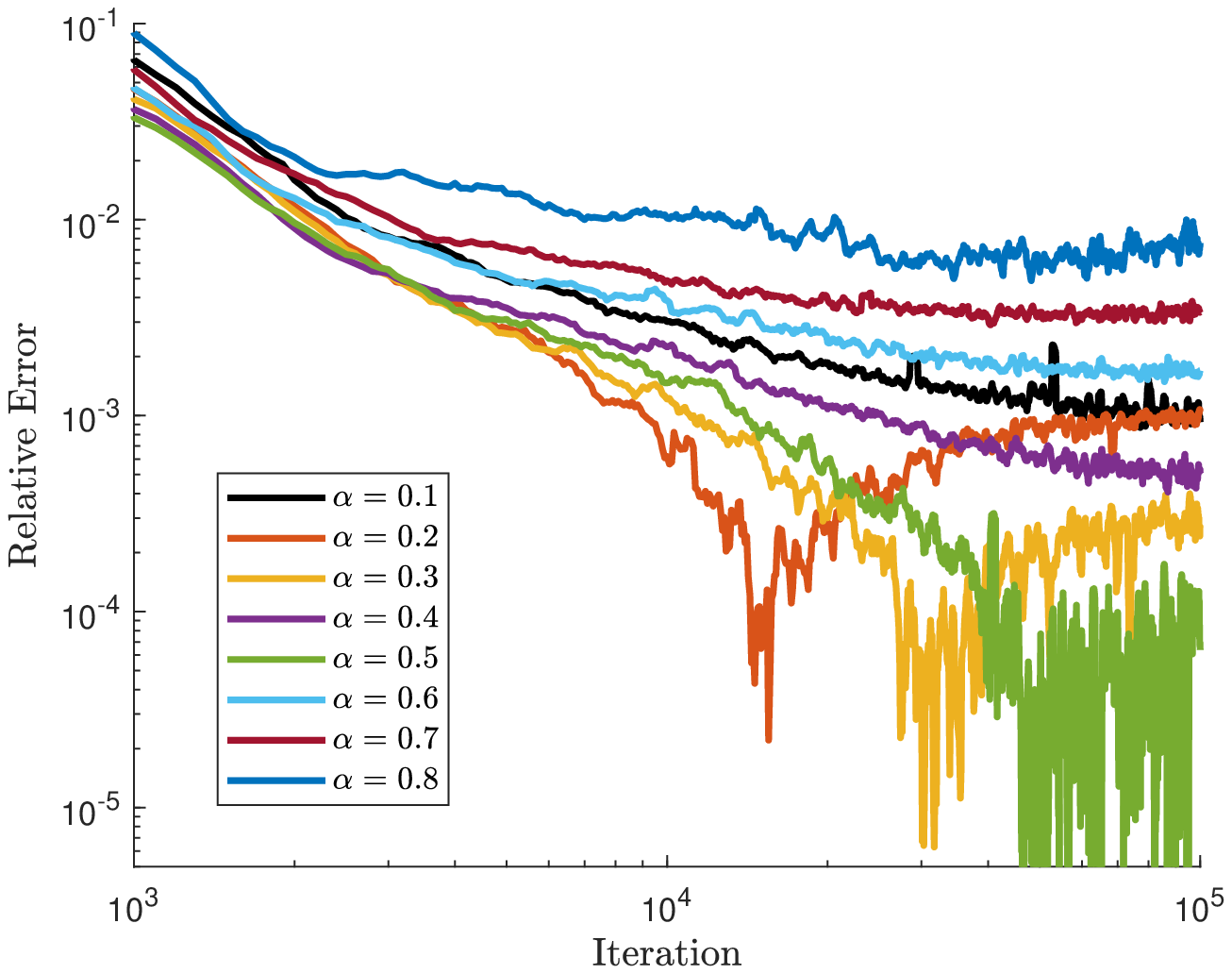}
\end{minipage}
\hspace{0.5cm}
\begin{minipage}[b]{0.45\linewidth}
\centering
  \includegraphics[width=\textwidth]{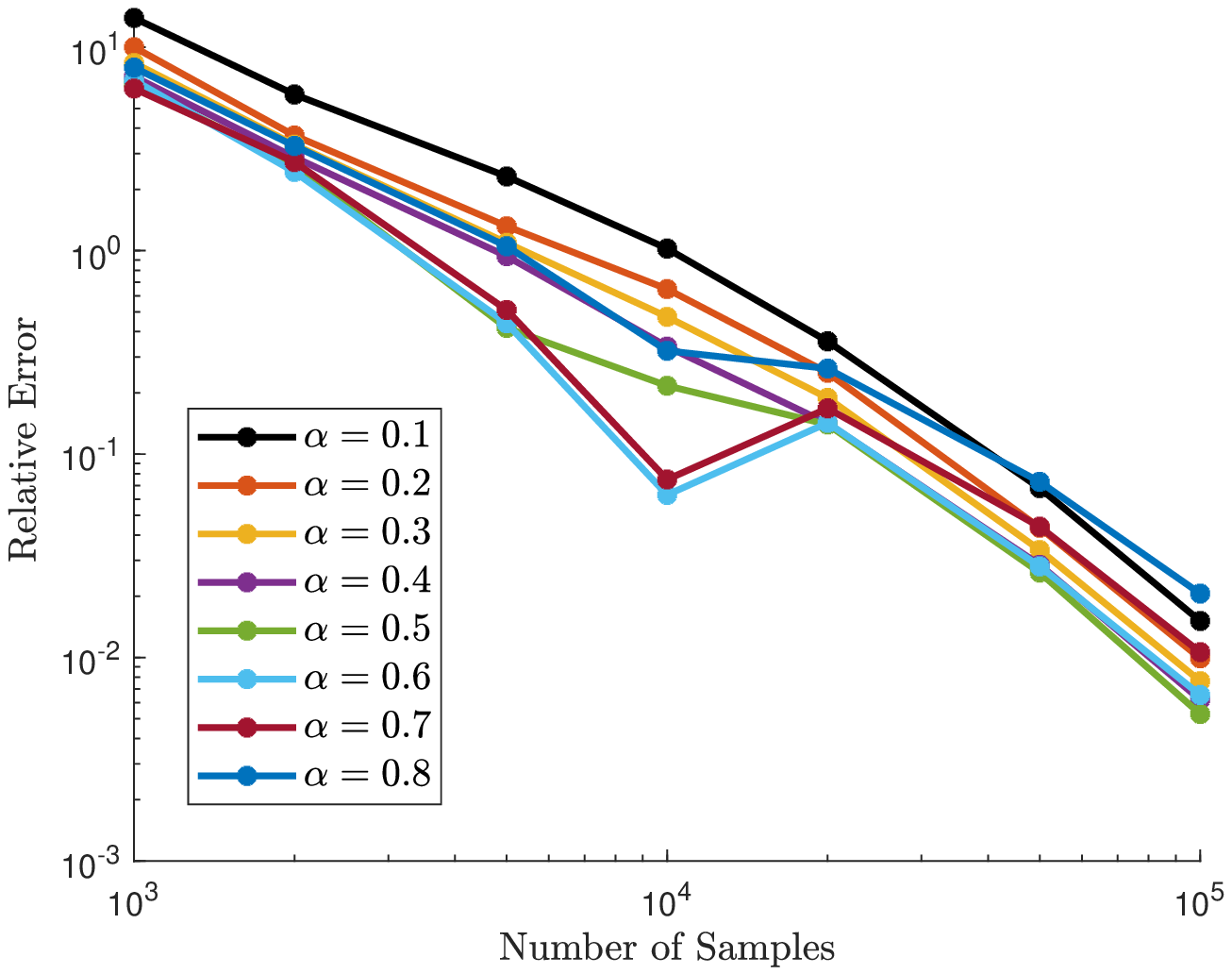}
\end{minipage}
   \caption{Left: Relative error of R{\'e}nyi divergence estimators \eqref{eq:estimator_k} between the distributions of $h(X)$ and $h(Y)$, where $X$ and $Y$ are $4$-dimensional Gaussians (with means $\mu_p=0$, $\mu_q=(2,0,0,0)$ and covariance matrices $\Sigma_p=I$, $\Sigma_q=diag(1.5,0.7,2,1)$) and $h:\mathbb{R}^4\to\mathbb{R}^{5000}$ is a nonlinear map.  Specifically, we let $h_i(x)=x_i$ for $i=1,...,4$ (to ensure it is an embedding) and then for $i>4$ we  define $h_i(x)=A_i(x)+c_{1,i}\cos(c_{2,i}x_{j_{1,i}})\sin(c_{3,i}x_{j_{2,i}})+c_{4,i}x_{j_{3,i}}x_{j_{4,i}}$, 
   where $A$ is an affine function  and $j_{k,i}\in\{1,...,4\}$; the parameters of $A$ and the $c_{k,i}$'s were randomly selected at the start of each run (all components are iid $N(0,1)$). The  indices $j_{k,i}$ were also randomly selected at the start of each run (iid $Unif(\{1,...,4\})$). Computations were done using   a neural network with 1 hidden layer of 128 nodes. On the left we show the relative error as a function of the number of SGD iterations; SGD was performed using a minibatch size of 1000 and an initial learning rate of $2\times 10^{-4}$. We show the moving average over the last 10 data points, with results  averaged over 20 runs. The behavior of the $\alpha=0.2, 0.3$ curves is due to the estimates crossing above and converging to a result slightly above the true values. On this problem the method failed to converge when $\alpha=0.9$ and when using the KL-divergence. Right: The relative error as a function of the number of samples, $N$.  We used a fixed number of 10000 SGD iterations, with the other parameters being as in the left panel.   Results were averaged over 100 runs.  The error is well approximated by a power-law decay of $N^{-1.4}$ and this behavior appears  insensitive to the value of $\alpha$.
   }\label{fig:Renyi_highD}
\end{figure}

 \subsection{Example: Estimating   R{\'e}nyi Divergences in High Dimensions}\label{sec:Renyi_HD_example}
  Estimators of divergences based on variational formulas are especially powerful in high dimensional systems with hidden low-dimensional (non-linear) structure, a setting that, again, is challenging for likelihood-ratio based  methods.  We illustrate the effectiveness of  the estimator \eqref{eq:estimator_k} in such a setting
 by estimating the R{\'e}nyi divergence between the distributions of $h(X)$ and $h(Y)$, where $X$ and $Y$ are both $4$-dimensional Gaussians and $h:\mathbb{R}^4\to \mathbb{R}^{5000}$ is a non-linear map.  If $h$ is an embedding (in particular, it must be one-to-one) then the data processing inequality (see Theorem 14 in \cite{1705001}) implies  $R_\alpha(P_{h(X)}\|P_{h(Y)})=R_\alpha(P_X\|P_Y)$, with the latter being easily computable (we use $P_Z$ to denote the distribution of a random variable $Z$).  Hence we have an exact value with which we can compare our numerical estimate of $R_\alpha(P_{h(X)}\|P_{h(Y)})$. In  Figure \ref{fig:Renyi_highD} we show the relative error,  comparing the results of our method to the exact values of the R{\'e}nyi divergences. The left panel shows the error  as a function of the number of SGD iterations and the right panel shows the error as a function of the size of the data set.     Our choice of nonlinear map $h$ is detailed in the caption.  We emphasize that the estimator \eqref{eq:renyi_est_def} is effective in high dimensions, with  no preprocessing (i.e., dimensional reduction) of the data   required; the results shown in Figure \ref{fig:Renyi_highD} were obtained by applying the algorithm  directly to the 5000-dimensional data.  Note that here, and as a general rule, the estimation becomes more difficult as $\alpha\to 0,1$ (i.e., the KL limits), regimes where the importance of rare events increases. The method failed to converge when $\alpha=1$ (i.e., when using the KL objective functional) and numerical estimation is even more challenging when $\alpha>1$.

\subsection{Example: Estimating R{\'e}nyi-Based Mutual Information}\label{sec:Renyi_MI}

Next we demonstrate the use of \eqref{eq:estimator_k} in the estimation of R{\'e}nyi mutual information,
\begin{flalign}\label{eq:Renyi_MI}
\text{\bf (R{\'e}nyi-MI)}&\hspace{4.8cm} R_\alpha(P_{(X,Y)}\|P_X\times P_Y)\,,&
\end{flalign} 
between random variables $X$ and $Y$; this should be compared with \cite{MINE_paper}, which used the Donsker-Varadhan variational formula to estimate KL mutual information, and \cite{birrell2020optimizing} which considered $f$-divergences.  (Mutual information is typically defined in terms of the KL-divergence, but one can consider many alternative divergences; see, e.g., \cite{f-sensitivity}).   In the left panel of Figure \ref{fig:Renyi_MI} we show the results of estimating the R{\'e}nyi-MI where $\alpha=1/2$ and $X$ and $Y$ are correlated  $20$-dimensional Gaussians   with component-wise correlation $\rho$ (the same case that was considered in \cite{MINE_paper,birrell2020optimizing}). This is a moderate dimensional problem (specifically, $40$-dimensional) with no low-dimensional structure. Our method is capable of accurately estimating the R{\'e}nyi-MI over a wide range of correlations,  something not achievable with likelihood-ratio based non-parametric methods (again, see \cite{NIPS2015_5911,MINE_paper}).

\begin{figure}[ht]

  \begin{minipage}[b]{0.45\linewidth}
\centering
  \includegraphics[width=\textwidth]{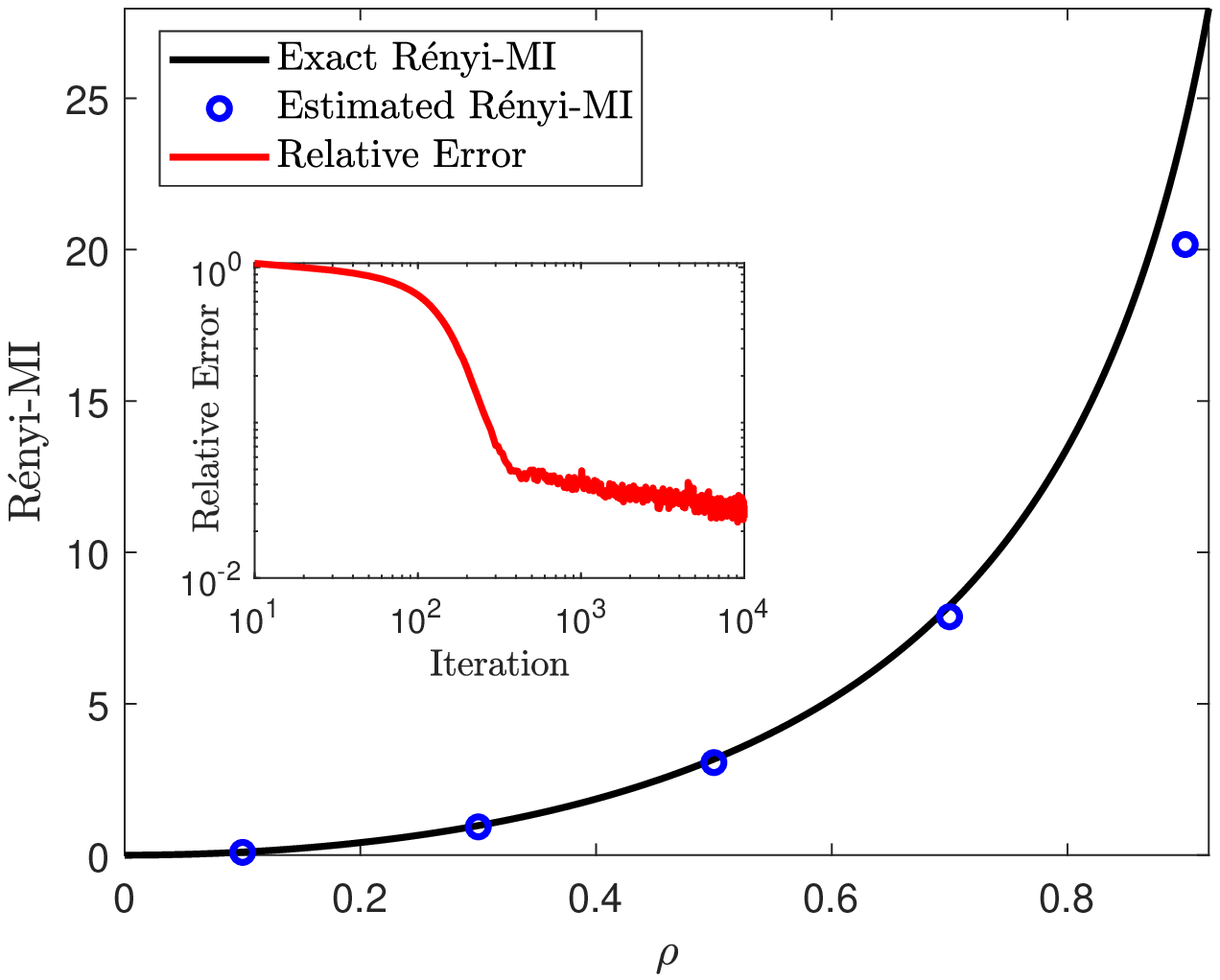}
\end{minipage}
\hspace{0.5cm}
\begin{minipage}[b]{0.45\linewidth}
\centering
  \includegraphics[width=\textwidth]{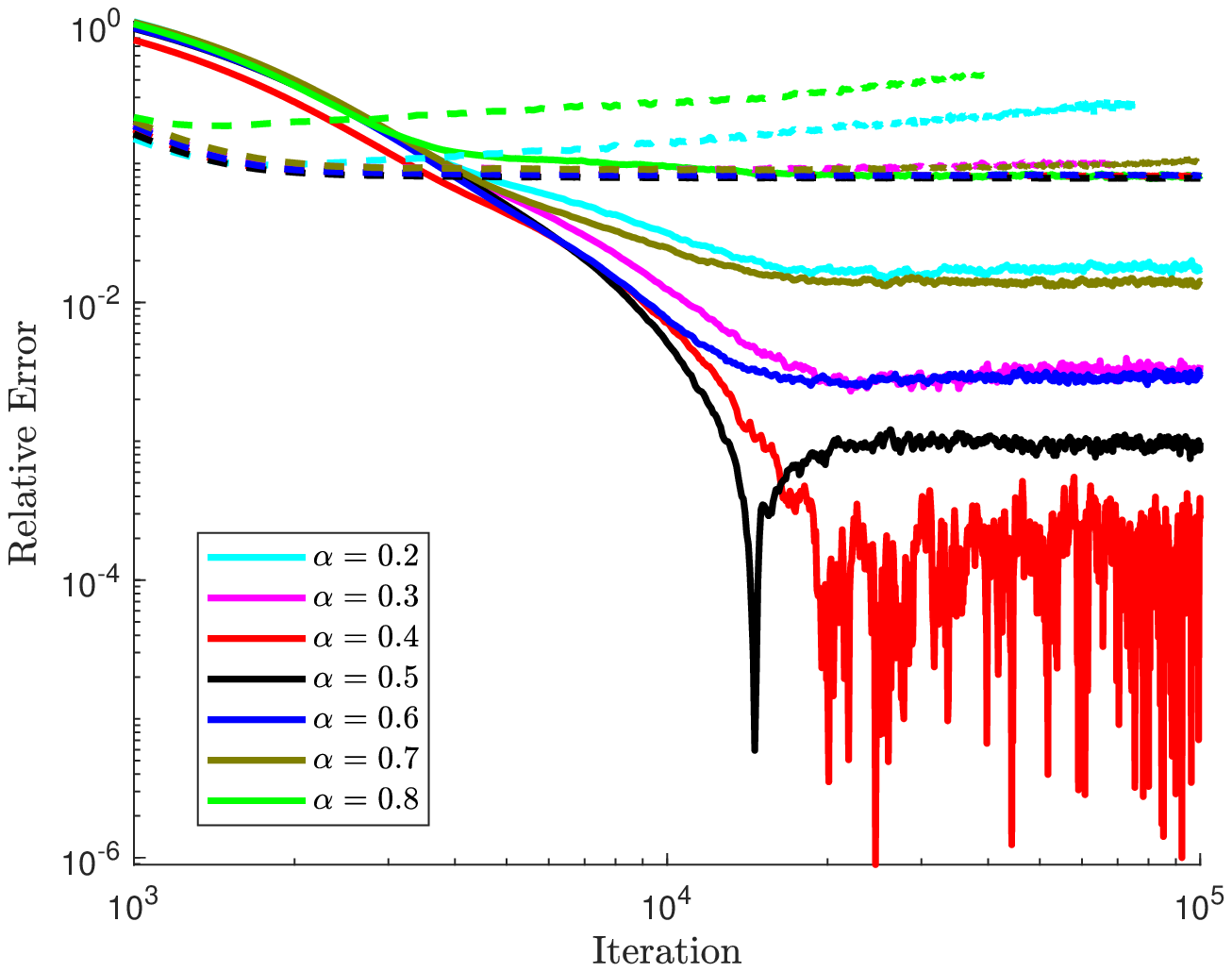}
\end{minipage}

   \caption{ Left: Estimation of R{\'e}nyi-based mutual information \eqref{eq:Renyi_MI} with $\alpha=1/2$  between  $20$-dimensional correlated Gaussians with component-wise correlation $\rho$. We used a  neural network with one hidden layer of 256 nodes and training was performed with a minibatch size of 1000. We show the R{\'e}nyi-MI as a function of $\rho$ after 10000 steps of SGD and averaged over 20 runs. The inset shows the relative error  for a single run with $\rho=0.5$, as a function of the number of SGD iterations.  Right: Estimation of the  R{\'e}nyi divergence between  two $25$-dimensional distributions of the form $\prod_{i=1}^{25}Beta(a_i,b_i)$.   The exponential family estimator \eqref{eq:exp_fam_est} (solid curves) outperformed the  neural-network estimator \eqref{eq:estimator_k} (dashed curves) with a comparable number of parameters (one  hidden layer with 4 nodes). Training was performed with a minibatch size of 1000 and an initial learning rate of $0.001$. Results were averaged over $20$ runs and the values of the $a$ and $b$ parameters for each distribution were randomly selected at the start of each run. Again, the estimation becomes more difficult as $\alpha\to 0,1$.}\label{fig:Renyi_MI}
\end{figure}

 \subsection{Example: Estimating R{\'e}nyi Divergence for Exponential Families}\label{sec:exp_family_ex}
 As discussed in Section \ref{sec:exp_family}, when working with an exponential family the formula for the optimizer (see Corollary \ref{cor:g_star}) reduces the R{\'e}nyi variational formula to a finite dimensional optimization problem (see \req{eq:Renyi_manifold}).  Using the corresponding estimator,
 \begin{align}\label{eq:exp_fam_est}
\widehat R^n_\alpha(Q\|P)=&\sup_{\Delta \kappa\in\mathbb{R}^{k} }\left\{\frac{1}{\alpha-1}\log\int e^{(\alpha-1)\Delta \kappa\cdot T(x)}dQ_n-\frac{1}{\alpha}\log\int e^{\alpha \Delta \kappa\cdot T(x)}dP_n\right\}\,,
 \end{align}
 can yield a substantial computational benefit over a  general-purpose neural-network estimator \eqref{eq:estimator_k}, as we now demonstrate. Here we estimate the divergence between products of Beta distributions; this is another moderate dimensional problem (specifically, $25$-dimensional) with no low dimensional structure.  The results are shown in the right panel of Figure \ref{fig:Renyi_MI}. The solid curves show the relative error that resulted from using \eqref{eq:exp_fam_est}, while the dashed curves show the result of using a neural-network estimator \eqref{eq:estimator_k} with a comparable number of parameters (specifically, one hidden layer with 4 nodes, and hence on the order of $100$ parameters).  The former achieves high accuracy over a range of $\alpha$'s while the latter performs poorly and fails to converge in several cases.  To achieve comparable accuracy with a neural-network estimator would require a much larger network, leading to a much greater computational cost.

\section{Proofs}

\subsection{Proof of the R{\'e}nyi-Donsker-Varadhan Variational Formula}\label{app:gen_DV}

The starting point for the proof of Theorem \ref{thm:gen_DV} is the following   variational formula, proven in \cite{atar2015robust}: Let  $P$ be a probability measure on $(\Omega,\mathcal{M})$, $g\in \mathcal{M}_b(\Omega)$, and  $\alpha>0$, $\alpha\neq 1$. Then
\begin{align}\label{eq:Renyi_MGF_sup}
&\frac{1}{\alpha}\log\!\left[\int e^{\alpha g} dP\right]=\sup_{Q }\left\{\frac{1}{\alpha-1}\log\!\left[\int e^{(\alpha-1) g}dQ \right]-R_{\alpha}(Q\|P)\right\},
\end{align}
where the optimization is over all probability measures, $Q$, on $(\Omega,\mathcal{M})$. (Let $\gamma=\alpha$, $\beta=\alpha-1$ in Eq. (1.3) of \cite{atar2015robust}). Though the right hand side of \req{eq:Renyi_MGF_sup} is not a Legendre transform, \eqref{eq:Renyi_MGF_sup} is still in some sense a `dual' version of \eqref{eq:Renyi_var}; this is reminiscent of the duality between the Donsker-Varadhan variational formula \eqref{eq:DV_var}  and the Gibbs variational principle (see Proposition 1.4.2 in \cite{Dupuis_Ellis}).  \req{eq:Renyi_MGF_sup} was previously used in \cite{atar2015robust,10.1214/19-AAP1468,atar2020robust} to derive uncertainty quantification bounds on risk-sensitive quantities (e.g., rare events or large deviations estimates) and in \cite{pmlr-v51-begin16} to derive PAC-Bayesian bounds.

In fact, we will not require the full strength of \eqref{eq:Renyi_MGF_sup}.  We will only need the following bound for $g\in \mathcal{M}_b(\Omega)$, $\alpha>0$, $\alpha\neq 1$:
\begin{align}\label{eq:Renyi_MGF_sup_ub}
&\frac{1}{\alpha-1}\log\!\left[\int e^{(\alpha-1) g}dQ \right]\leq\frac{1}{\alpha}\log\!\left[\int e^{\alpha g} dP\right]+R_{\alpha}(Q\|P)\,.\
\end{align}
 To keep our argument self-contained, we include a proof of \req{eq:Renyi_MGF_sup_ub} below. Our proof is adapted from the proof of \eqref{eq:Renyi_MGF_sup} found  in Section 4 of \cite{atar2015robust}. We note that an alternative proof of \req{eq:Renyi_MGF_sup_ub} can be given by using a different variational formula for the R{\'e}nyi divergences, which can be found in Theorem 30 of \cite{van2014renyi} and also in Theorem 1 of \cite{8006657}.
\begin{proof}[Proof of \req{eq:Renyi_MGF_sup_ub}]
We separate the proof into two cases.\\
1) $\alpha>1$: If $Q\not\ll P$ the result is trivial (see \req{eq:Renyi_formula}), so   assume $Q\ll P$.  For $g\in\mathcal{M}_b(\Omega)$ we can use H{\"o}lder's inequality with conjugate exponents $\alpha/(\alpha-1)$ and $\alpha$ to obtain
\begin{align}   
    \frac{1}{\alpha-1}\log\int e^{(\alpha-1)g}dQ  \leq&\frac{1}{\alpha-1}\log\left[\left(\int (e^{(\alpha-1)g})^{\frac{\alpha}{\alpha-1}}dP\right)^{\frac{\alpha-1}{\alpha}}\!\!\left(\int \left(\frac{dQ}{dP}\right)^\alpha \!\!dP\right)^{\frac{1}{\alpha}}\right]\\
    =&\frac{1}{\alpha}\log \int e^{\alpha g}dP+\frac{1}{\alpha(\alpha-1)}\log \int (dQ/dP)^\alpha dP\,.\notag
\end{align}
In this case the definition \eqref{eq:Renyi_formula} implies $R_\alpha(Q\|P)=\frac{1}{\alpha(\alpha-1)}\log \int (dQ/dP)^\alpha dP$ and so we have proven the claimed bound \eqref{eq:Renyi_MGF_sup_ub}.

2) $\alpha\in(0,1)$: Let $dP=pd\nu$, $dQ=qd\nu$ as in  definition \eqref{eq:Renyi_formula} and define $h= e^{-g} q$. Then
\begin{align}\label{eq:renyi_h}
R_\alpha(Q\|P)=&\frac{1}{\alpha(\alpha-1)}\log \int q^\alpha p^{1-\alpha}d\nu=\frac{1}{\alpha(\alpha-1)}\log \int_{p,q>0} (h/p)^{\alpha-1}  e^{(\alpha -1)g}dQ\,.
\end{align}
Using H{\"o}lder's inequality for the measure $e^{(\alpha-1)g}dQ$, the conjugate exponents $1/\alpha$ and $1/(1-\alpha)$, and the functions $1$ and $1_{q,p>0}(h/p)^{\alpha-1}$ we find
\begin{align}
 \int_{q,p>0}(h/p)^{\alpha-1}  e^{(\alpha -1)g}dQ \leq& \left(\int
   e^{(\alpha -1)g}dQ\right)^\alpha  \left(\int_{q,p>0} (h/p)^{-1} e^{(\alpha -1)g}dQ\right)^{1-\alpha}\\
      =&\left(\int
   e^{(\alpha -1)g}dQ\right)^\alpha  \left(\int_{q,p>0}  e^{\alpha g }dP\right)^{1-\alpha}\notag\\
   \leq&\left(\int
   e^{(\alpha -1)g}dQ\right)^\alpha  \left(\int  e^{\alpha g }dP\right)^{1-\alpha}\,.\notag
\end{align}
Taking the logarithm of both sides, dividing by $\alpha(\alpha-1)$ (which is negative), and using \req{eq:renyi_h} we arrive at
\begin{align}
    R_\alpha(Q\|P)\geq \frac{1}{\alpha-1}\log\int e^{(\alpha-1)g}dQ-\frac{1}{\alpha}\log\int e^{\alpha g}dP\,.
\end{align}
This implies the claimed bound \eqref{eq:Renyi_MGF_sup_ub} and completes the proof.
\end{proof}
We now use \req{eq:Renyi_MGF_sup_ub} to derive the variational formula \eqref{eq:Renyi_var}. The argument is inspired by the proof of the  Donsker-Varadhan variational formula from Appendix C.2 in \cite{Dupuis_Ellis}.
\begin{proof}[Proof of Theorem \ref{thm:gen_DV}]
First let $\Gamma=\mathcal{M}_b(\Omega)$. If one can show \req{eq:Renyi_var} for all $\alpha>1$ and all $P,Q$, then, using  \req{eq:R_neg_alpha} and reindexing $g\to -g$ in the supremum, one finds that \req{eq:Renyi_var} also holds for all $\alpha<0$.  So we only need to consider the cases $\alpha\in(0,1)$ and $\alpha>1$.

 \req{eq:Renyi_MGF_sup_ub} immediately implies
\begin{align}
R_\alpha(Q\|P)
\geq&\sup_{g\in \mathcal{M}_b(\Omega)}\left\{\frac{1}{\alpha-1}\log\!\left[\int e^{(\alpha-1)g}dQ\right]-\frac{1}{\alpha}\log\!\left[\int e^{\alpha g}dP\right]\right\}\notag\\
\equiv& \widetilde{R}_\alpha(Q\|P)\,.\label{eq:tildeR_def}
\end{align}
If $Q\ll P$ and $g^*\equiv\log(dQ/dP)\in\mathcal{M}_b(\Omega)$ then the reverse inequality easily follows from an explicit calculation. However, $g^*\in\mathcal{M}_b(\Omega)$ is a very strong assumption which we do not make here.  Our general proof will therefore require several limiting arguments, but will still be based on this intuition.

We separate the proof of the reverse inequality into three cases.\\
1) $\alpha>1$ and $Q\not\ll P$: We will show $\widetilde{R}_\alpha(Q\|P)=\infty$, which will prove the desired inequality.  To do this, take a measurable set $A$ with $P(A)=0$ but $Q(A)\neq 0$ and define $g_n=n1_A$.  The definition (\ref{eq:tildeR_def}) implies
\begin{align}
\widetilde{R}_\alpha(Q\|P)\geq& \frac{1}{\alpha-1}\log\int e^{(\alpha-1)g_n}dQ-\frac{1}{\alpha}\log\int e^{\alpha g_n}dP\\
=&\frac{1}{\alpha-1}\log\!\left[ e^{(\alpha-1)n}Q(A)+Q(A^c)\right]-\frac{1}{\alpha}\log P(A^c)\,.\notag
\end{align}
The lower bound goes to $+\infty$ as $n\to\infty$ (here it is key that $\alpha>1$) and therefore we have the claimed result.

2) $\alpha>1$ and $Q\ll P$: In this case we can take $\nu=P$ in \req{eq:Renyi_formula} and write
\begin{align}
R_\alpha(Q\|P)=\frac{1}{\alpha(\alpha-1)}\log\!\left[\int (dQ/dP)^\alpha dP\right].
\end{align}
Define
\begin{align}\label{eq:f_nm_def}
f_{n,m}(x)=x1_{1/m<x<n}+n1_{x\geq n}+1/m1_{x\leq 1/m}
\end{align}
 and $g_{n,m}=\log(f_{n,m}(dQ/dP))$.  These are bounded and so  \req{eq:tildeR_def} implies
\begin{align}\label{eq:R_lb_alpha_g1}
\widetilde{R}_\alpha(Q\|P)
\geq& \frac{1}{\alpha-1}\log\int e^{(\alpha-1)g_{n,m}}dQ-\frac{1}{\alpha}\log\int e^{\alpha g_{n,m}}dP\\
=&\frac{1}{\alpha-1}\log\int f_{n,m}(dQ/dP)^{(\alpha-1)}\frac{dQ}{dP}dP-\frac{1}{\alpha}\log\int f_{n,m}(dQ/dP)^\alpha dP\,.\notag
\end{align}
Define  $f_{n,\infty}(x)=x1_{x<n}+n1_{x\geq n}$. Using the dominated convergence theorem to take $m\to\infty$ in \eqref{eq:R_lb_alpha_g1} we find 
\begin{align}
\widetilde{R}_\alpha(Q\|P)\geq&\frac{1}{\alpha-1}\log\int f_{n,\infty}(dQ/dP)^{(\alpha-1)}\frac{dQ}{dP}dP-\frac{1}{\alpha}\log\int f_{n,\infty}(dQ/dP)^\alpha dP\\
\geq&\frac{1}{\alpha(\alpha-1)}\log\int f_{n,\infty}(dQ/dP)^\alpha dP\,.\notag
\end{align}
To obtain the last line we used  $xf_{n,\infty}(x)^{\alpha-1}\geq f_{n,\infty}(x)^\alpha$. Next, we have $0\leq f_{n,\infty}(dQ/dP)\nearrow dQ/dP$ as $n\to\infty$, and so the monotone convergence theorem  implies
\begin{align}
\widetilde{R}_\alpha(Q\|P)\geq&\frac{1}{\alpha(\alpha-1)}\log\int(dQ/dP)^\alpha dP=R_\alpha(Q\|P)\,.
\end{align}
This proves the claimed result  for case 2.

3) $\alpha\in(0,1)$: In this case definition \eqref{eq:Renyi_formula} becomes
\begin{align}
R_\alpha(Q\|P)=\frac{1}{\alpha(\alpha-1)}\log\!\left[\int_{p>0} q^\alpha p^{1-\alpha}d\nu\right]\,,
\end{align}
where $\nu$ is any sigma-finite positive measure for which $dQ=qd\nu$ and $dP=pd\nu$. Define $f_{n,m}(x)$ via \req{eq:f_nm_def} and let $g_{n,m}=\log(f_{n,m}(q/p))$, where $q/p$ is defined to be $0$ if $q=0$ and $+\infty$ if $p=0$ and $q\neq 0$. The functions $g_{n,m}$ are bounded, hence \req{eq:tildeR_def} implies
\begin{align}\label{eq:R_lb_alpha_l1}
\widetilde{R}_\alpha(Q\|P)\geq& -\frac{1}{1-\alpha}\log\int e^{(\alpha-1)g_{n,m}}dQ-\frac{1}{\alpha}\log\int e^{\alpha g_{n,m}}dP\\
=&-\frac{1}{1-\alpha}\log\int f_{n,m}(q/p)^{\alpha-1}qd\nu-\frac{1}{\alpha}\log\int f_{n,m}(q/p)^\alpha pd\nu\,.\notag
\end{align} 
Define $f_{\infty,m}(x)=x1_{x> 1/m}+1/m1_{x\leq1/m}$. We have the bound  $f_{n,m}(q/p)^{\alpha-1}\leq (1/m)^{\alpha-1}$ (here it is critical that $\alpha\in(0,1)$) and so the dominated convergence theorem can be used to compute the $n\to\infty$ limit of the first term on the right hand side of \eqref{eq:R_lb_alpha_l1}, while the second term can be bounded using $f_{n,m}(q/p)^\alpha \leq f_{\infty,m}(q/p)^\alpha$.  We thereby obtain  
\begin{align}\label{eq:H_lb}
\widetilde{R}_\alpha(Q\|P)\geq&-\frac{1}{1-\alpha}\log\int f_{\infty,m}(q/p)^{\alpha-1}qd\nu-\frac{1}{\alpha}\log\int f_{\infty,m}(q/p)^\alpha pd\nu\\
\geq&-\frac{1}{1-\alpha}\log\int_{q>0,p>0} q^\alpha p^{1-\alpha}d\nu-\frac{1}{\alpha}\log\int_{p>0} f_{\infty,m}(q/p)^\alpha pd\nu\,,\notag
\end{align}
where we used $f_{\infty,m}(x)\geq x$ to obtain the second line.  Using the dominated convergence theorem   on the second term (which is always finite) we find
\begin{align}
\widetilde{R}_\alpha(Q\|P)\geq&-\frac{1}{1-\alpha}\log\int_{p>0} q^\alpha p^{1-\alpha}d\nu-\frac{1}{\alpha}\log\int_{p>0} q^\alpha p^{1-\alpha}d\nu\notag\\
=&\frac{1}{\alpha(\alpha-1)}\log\int_{p>0} q^\alpha p^{1-\alpha}d\nu=R_\alpha(Q\|P)\,. \notag
\end{align}
Therefore the claim is proven in case 3, and  the proof  of \req{eq:Renyi_var} is complete.

In addition, now suppose that $(\Omega,\mathcal{M})$ is a metric space with the Borel $\sigma$-algebra.  We will next show that \eqref{eq:Renyi_var} holds with $\Gamma=C_b(\Omega)$, the space of bounded continuous functions on $\Omega$. Define the probability measure $\mu=(P+Q)/2$ and let $g\in\mathcal{M}_b(\Omega)$. Lusin's theorem (see, e.g., Appendix D in \cite{dudley2014uniform}) implies that for all $n\in\mathbb{Z}^+$ there exists a closed set $F_n\subset \Omega$ such that  $\mu(F_n^c)<1/n$ and $g|_{F_n}$ is continuous. By the Tietze Extension Theorem (see, e.g., Theorem 4.16 in \cite{folland2013real}) there exists $g_n\in C_b(\Omega)$ with $\|g_n\|_\infty\leq \|g\|_\infty$ and $g_n=g$ on $F_n$. Therefore 
\begin{align}\label{eq:Cb_reduction1}
\left|\int e^{(\alpha-1)g_n}dQ-\int e^{(\alpha-1) g}dQ\right|
\leq& (\|e^{(\alpha-1)g_n}\|_\infty+\|e^{(\alpha-1)g}\|_\infty)Q(F_n^c)\\
\leq& 4e^{|\alpha-1|\|g\|_\infty}/n\to 0\notag
\end{align}
as $n\to\infty$.  Similarly, we have $\lim_{n\to\infty}\int e^{\alpha g_n}dP= \int e^{\alpha g}dP$. Hence
\begin{align}\label{eq:Cb_reduction2}
 &\sup_{g\in C_b(\Omega)}\!\left\{\frac{1}{\alpha-1}\log\!\left[\int e^{(\alpha-1)g}dQ\right]-\frac{1}{\alpha}\log\!\left[\int e^{\alpha g}dP\right]\right\}\\
 \geq&\lim_{n\to\infty}\left(\frac{1}{\alpha-1}\log\!\left[\int e^{(\alpha-1)g_n}dQ\right]-\frac{1}{\alpha}\log\!\left[\int e^{\alpha g_n}dP\right]\right)\notag\\
 =&\frac{1}{\alpha-1}\log\!\left[\int e^{(\alpha-1)g}dQ\right]-\frac{1}{\alpha}\log\!\left[\int e^{\alpha g}dP\right]\,.\notag
\end{align}
$g\in\mathcal{M}_b(\Omega)$ was arbitrary and so we have proven
\begin{align}\label{eq:Cb_reduction3}
&\sup_{g\in C_b(\Omega)}\!\left\{\frac{1}{\alpha-1}\log\!\left[\int e^{(\alpha-1)g}dQ\right]-\frac{1}{\alpha}\log\!\left[\int e^{\alpha g}dP\right]\right\}\\
\geq&\!\! \sup_{g\in\mathcal{M}_b(\Omega)}\!\left\{\frac{1}{\alpha-1}\log\!\left[\int e^{(\alpha-1)g}dQ\right]-\frac{1}{\alpha}\log\!\left[\int e^{\alpha g}dP\right]\right\}.\notag
\end{align}
The reverse inequality is trivial. Therefore we have shown that  \eqref{eq:Renyi_var} holds with $\Gamma=C_b(\Omega)$. To see that  \eqref{eq:Renyi_var} holds when $\Gamma=  \Lip_b(\Omega)$, use   the fact that every $g\in C_b(\Omega)$ is the pointwise limit of Lipschitz functions, $g_n$, with $\|g_n\|_\infty\leq \|g\|_\infty$ (see Box 1.5 on page 6 of \cite{santambrogio2015optimal}). The result then follows from a similar computation to the above, this time using the dominated convergence theorem.

Finally, we prove \eqref{eq:Renyi_var} with $\Gamma=\mathcal{M}(\Omega)$. To do this we need to show  
\begin{align}\label{eq:Renyi_lb}
R_\alpha(Q\|P)\geq&\frac{1}{\alpha-1}\log\!\left[\int e^{(\alpha-1)g}dQ\right]-\frac{1}{\alpha}\log\!\left[\int e^{\alpha g}dP\right]
\end{align}
for all  $g\in\mathcal{M}(\Omega)$.  The equality \eqref{eq:Renyi_var}  then  follows by combining \req{eq:Renyi_lb} with  Theorem \ref{thm:gen_DV}.  To prove the bound \eqref{eq:Renyi_lb} we start by fixing  $g\in\mathcal{M}(\Omega)$ and defining the truncated functions $g_{n,m}=-n 1_{g<-n}+g1_{-n\leq g\leq m}+m1_{g>m}$. These are bounded and so Theorem \ref{thm:gen_DV} implies
\begin{align}\label{eq:R_gnm_bound}
R_\alpha(Q\|P)\geq& \frac{1}{\alpha-1}\log\!\left[\int e^{(\alpha-1)g_{n,m}}dQ\right]-\frac{1}{\alpha}\log\!\left[\int e^{\alpha g_{n,m}}dP\right].
\end{align}
We now consider three cases, based on the value of $\alpha$.

1)  $\alpha>1$:   If  $\int e^{\alpha g}dP=\infty$ then \req{eq:Renyi_lb} is trivial (due to our convention that $\infty-\infty=-\infty$, this is true even if $\int e^{(\alpha-1)g}dQ=\infty$), so suppose $\int e^{\alpha g}dP<\infty$.  When $\alpha>1$,  \req{eq:R_gnm_bound} involves integrals of the form $\int e^{c g_{n,m}}d\mu$ where $c> 0$ and $\mu$ is a probability measure. We have $\lim_{n\to\infty}e^{cg_{n,m}}=e^{cg_m}$ where $g_m\equiv g1_{ g\leq m}+m1_{g>m}$ and $e^{cg_{n,m}}\leq e^{cm}$ for all $n$. Therefore the dominated convergence theorem implies
\begin{align}
\lim_{n\to\infty}\int e^{c g_{n,m}}d\mu=\int e^{c g_{m}}d\mu\,.
\end{align}
We have $0\leq e^{cg_m}\nearrow e^{cg}$ as $m\to \infty$ and hence the monotone convergence theorem yields
\begin{align}
\lim_{m\to\infty}\lim_{n\to\infty}\int e^{c g_{n,m}}d\mu=\lim_{m\to\infty}\int e^{c g_{m}}d\mu=\int e^{cg}d\mu\,.
\end{align}
Therefore we can take the iterated limit of \req{eq:R_gnm_bound} to obtain
\begin{align}
R_\alpha(Q\|P)
\geq& \frac{1}{\alpha-1}\log\!\left[\int e^{(\alpha-1)g}dQ\right]-\frac{1}{\alpha}\log\!\left[\int e^{\alpha g}dP\right]
\end{align}
(note that we are in the sub-case where the second term is finite, and so this is true even if $\int e^{(\alpha-1)g}dQ=\infty$). This proves the claim in case 1.

2) $\alpha<0$: Use \req{eq:R_neg_alpha} and  apply the result of  case 1 to the function $-g$ to obtain \eqref{eq:Renyi_lb}.

3) $0<\alpha<1$: If either $\int e^{(\alpha-1)g}dQ=\infty$ or $\int e^{\alpha g}dP=\infty$ then the bound (\ref{eq:Renyi_lb}) is again trivial, so suppose they are both finite.  For $c\in\mathbb{R}$ we can bound $e^{cg_{n,n}}\leq 1+e^{c g}$ and $\lim_{n\to\infty}e^{cg_{n,n}}= e^{cg}$.  Therefore the dominated convergence theorem implies that
\begin{align}
R_\alpha(Q\|P)
\geq&\lim_{n\to\infty}\left( \frac{1}{\alpha-1}\log\!\left[\int e^{(\alpha-1)g_{n,n}}dQ\right]-\frac{1}{\alpha}\log\!\left[\int e^{\alpha g_{n,n}}dP\right]\right)\\
=& \frac{1}{\alpha-1}\log\!\left[\int e^{(\alpha-1)g}dQ\right]-\frac{1}{\alpha}\log\!\left[\int e^{\alpha g}dP\right].\notag
\end{align}
This proves \req{eq:Renyi_lb} in case 3 and thus completes the proof of \req{eq:Renyi_var} when $\Gamma=\mathcal{M}(\Omega)$. \req{eq:Renyi_var} for the spaces between $\mathcal{M}_b(\Omega)$ (or $\Lip_b(\Omega)$) and $\mathcal{M}(\Omega)$ then easily follows.
\end{proof}

We end this subsection by deriving a formula for the optimizer.
\begin{proof}[Proof of Corollary \ref{cor:g_star}]

 If $Q\ll P$, $dQ/dP>0$, and $(dQ/dP)^\alpha\in L^1(P)$ then we also have $P\ll Q$.  By taking $\nu=P$ in \eqref{eq:Renyi_formula} (and for $\alpha<0$, using the definition \eqref{eq:R_neg_alpha}) we find
\begin{align}
R_\alpha(Q\|P)=\frac{1}{\alpha(\alpha-1)}\log \int\left({dQ}/{dP}\right)^\alpha dP\,.
\end{align}
Letting $g^*=\log dQ/dP$, it is straightforward to show by direct calculation that
\begin{align}
&\frac{1}{\alpha-1}\log\!\left[\int e^{(\alpha-1)g^*}dQ\right]-\frac{1}{\alpha}\log\!\left[\int e^{\alpha g^*}dP\right]=\frac{1}{\alpha(\alpha-1)}\log \int\left({dQ}/{dP}\right)^\alpha dP\,.
\end{align}
This, together with Theorem \ref{thm:gen_DV}, implies that \req{eq:Renyi_var} holds for any $\Gamma$ with $g^*\in\Gamma\subset\mathcal{M}(\Omega)$ and $g^*$ is an optimizer. This completes the proof.
\end{proof}
\subsection{Consistency Proof}\label{sec:consistency_proof}
In this subsection we prove consistency of the R{\'e}nyi divergence estimator \eqref{eq:estimator_k}.
\begin{proof}[Proof of Lemma \ref{lemma:Renyi_var_Phi}]
Both assumptions 1a and 2a imply that for $\phi\in\Phi$ there exists $\psi\in\Psi$ with $|\phi|\leq \psi$.  Either of the integrability assumptions 1b - 1c or 2b - 2c then imply that all expectations on the right hand side of \req{eq:var_formula_Phi} are finite.  Define the probability measure $\mu=(P+Q)/2$.  $\Omega$ is  a complete separable metric space, hence $\mu$ is inner regular.  In particular, for any $\delta>0$ there exists a compact set $K_\delta$ such that $\mu(K_\delta)>1-\delta$.  Fix  $g\in\Lip_b(\Omega)$. Assumptions 1a and 2a imply that  there exists $\psi_g\in\Psi$ such that $|g|\leq \psi_g$ and for all $\delta,\epsilon>0$ there exists $\phi_{\delta,\epsilon}\in\Phi$ with $|\phi_{\delta,\epsilon}|\leq \psi_g$ and, in the case of 1a, 
\begin{align}\label{eq:1a_approx}
    \sup_{x\in K_\delta}|g(x)-\phi_{\delta,\epsilon}(x)|<\epsilon\,,
    \end{align}
    while in the case of 2a we have
\begin{align}\label{eq:2a_approx}
    \max\left\{\left(\int_{K_\delta}|g-\phi_{\delta,\epsilon}|^pdQ\right)^{1/p},\left(\int_{K_\delta}|g-\phi_{\delta,\epsilon}|^pdP\right)^{1/p}\right\}<\epsilon\,.
\end{align}

The fact that $g$ and $\phi_{\delta,\epsilon}$ are bounded by $\psi_g$ implies
\begin{align}
&\int e^{(\alpha-1)\phi_{\delta,\epsilon}}dQ,\int e^{(\alpha-1)g}dQ\in[M_{g,-},M_{g,+}]\,,\,\,\,\,\,\int e^{\alpha\phi_{\delta,\epsilon}}dP, \int e^{\alpha g}dP\in [N_{g,-},N_{g,+}]\,,
\end{align}
where $M_{g,\pm}\equiv \int e^{\pm|\alpha-1|\psi_g }dQ\in(0,\infty)$, $N_{g,\pm}\equiv\int e^{\pm|\alpha|\psi_g} dP\in(0,\infty)$.  Using the fact that  $\log$ is $1/c$-Lipschitz on $[c,\infty)$ for all $c>0$  we can compute
\begin{align}
&\left|\frac{1}{\alpha-1}\log\int e^{(\alpha-1)g}dQ-\frac{1}{\alpha}\log\int e^{\alpha g}dP\right.\\
&\left.-\left(\frac{1}{\alpha-1}\log\int e^{(\alpha-1)\phi_{\delta,\epsilon}}dQ-\frac{1}{\alpha}\log\int e^{\alpha \phi_{\delta,\epsilon}}dP\right)\right|\notag\\
\leq &\frac{1}{|\alpha-1|M_{g,-}}\left|\int e^{(\alpha-1)g}dQ-\int e^{(\alpha-1)\phi_{\delta,\epsilon}}dQ\right|+\frac{1}{|\alpha|N_{g,-}}\left|\int e^{\alpha g}dP-\int e^{\alpha \phi_{\delta,\epsilon}}dP\right|\notag\\
\leq &\frac{1}{|\alpha-1|M_{g,-}}\int_{K_\delta} \left|e^{(\alpha-1)g}- e^{(\alpha-1)\phi_{\delta,\epsilon}}\right|dQ+\frac{2}{|\alpha-1|M_{g,-}}\int e^{|\alpha-1|\psi_g}1_{K_\delta^c}dQ\notag\\
&+\frac{1}{|\alpha|N_{g,-}}\int_{K_\delta} \left|e^{\alpha g}- e^{\alpha \phi_{\delta,\epsilon}}\right|dP+\frac{2}{|\alpha|N_{g,-}}\int e^{|\alpha|\psi_g}1_{K_\delta^c}dP\,.\notag
\end{align}
 Under assumption 1a, and restricting to $\epsilon\leq 1$ we can use \eqref{eq:1a_approx} to bound $|\phi_{\delta,\epsilon}|\leq \|g\|_\infty+1$ on $K_\delta$ and so $|e^{cg}-e^{c\phi_{\delta,\epsilon}}|1_{K_\delta}\leq |c|e^{|c|(\|g\|_\infty+1)}\epsilon$ for $c\in\mathbb{R}$.   Under assumption 2a we can use \eqref{eq:2a_approx} and H{\"o}lder's inequality to bound
 \begin{align}
 &\frac{1}{|\alpha-1|M_{g,-}}\int_{K_\delta} \left|e^{(\alpha-1)g}- e^{(\alpha-1)\phi_{\delta,\epsilon}}\right|dQ+\frac{1}{|\alpha|N_{g,-}}\int_{K_\delta} \left|e^{\alpha g}- e^{\alpha \phi_{\delta,\epsilon}}\right|dP\\
\leq &\frac{1}{ M_{g,-}}\int_{K_\delta} e^{|\alpha-1|\psi_g}|g-\phi_{\delta,\epsilon}|dQ+\frac{1}{N_{g,-}}\int_{K_\delta}e^{|\alpha|\psi_g}|g-\phi_{\delta,\epsilon}|dP\notag\\
\leq&\frac{1}{ M_{g,-}}\left(\int  e^{q|\alpha-1|\psi_g}dQ\right)^{1/q}\epsilon+\frac{1}{N_{g,-}}\left(\int e^{q|\alpha|\psi_g}dP\right)^{1/q}\epsilon\,.\notag
\end{align}
 In either case, we find
\begin{align}
&\frac{1}{\alpha-1}\log\int e^{(\alpha-1)g}dQ-\frac{1}{\alpha}\log\int e^{\alpha g}dP\\
\leq &\sup_{\phi\in\Phi}\left\{\frac{1}{\alpha-1}\log\int e^{(\alpha-1)\phi}dQ-\frac{1}{\alpha}\log\int e^{\alpha \phi}dP\right\}+D_{\delta,\epsilon}\,,\notag\\
&D_{\delta,\epsilon}\equiv D_g\epsilon+\frac{2}{|\alpha-1|M_{g,-}}\int_{K_\delta^c} e^{|\alpha-1|\psi_g}dQ+\frac{2}{|\alpha|N_{g,-}}\int_{K_\delta^c} e^{|\alpha|\psi_g}dP\,,\notag
\end{align}
where $D_g\in(0,\infty)$ is given by
\begin{align}
    D_g=M_{g,-}^{-1} e^{|\alpha-1|(\|g\|_\infty+1)}+N_{g,-}^{-1} e^{|\alpha|(\|g\|_\infty+1)}
\end{align}
 under assumption 1 and by
 \begin{align}
    D_g=M_{g,-}^{-1}\left(\int  e^{q|\alpha-1|\psi_g}dQ\right)^{1/q}+N_{g,-}^{-1}\left(\int e^{q|\alpha|\psi_g}dP\right)^{1/q}
\end{align}
 under assumption 2. Under either set of assumptions we have $e^{|\alpha-1|\psi_g}\in L^1(Q)$ and $e^{|\alpha|\psi_g}\in L^1(P)$. Combining this fact with $Q(K_\delta^c),P(K^c_\delta)\leq 2\delta$ we can use the dominated convergence theorem for convergence in measure to compute
\begin{align}
\lim_{\delta\searrow 0}\int_{K_\delta^c}e^{|\alpha-1|\psi_g}dQ=0=\lim_{\delta\searrow 0}\int_{K_\delta^c}e^{|\alpha|\psi_g}dP
\end{align}
(here   it is important that $\psi_g$ is independent of $\delta$). Therefore taking $\epsilon,\delta\searrow 0$ we  obtain
\begin{align}
&\frac{1}{\alpha-1}\log\int e^{(\alpha-1)g}dQ-\frac{1}{\alpha}\log\int e^{\alpha g}dP\\
\leq &\sup_{\phi\in\Phi}\left\{\frac{1}{\alpha-1}\log\int e^{(\alpha-1)\phi}dQ-\frac{1}{\alpha}\log\int e^{\alpha \phi}dP\right\}\,.\notag
\end{align}
This holds for all $g\in \Lip_b(\Omega)$ and so
\begin{align}\label{eq:Phi_bound}
&\sup_{g\in\Lip_b(\Omega)}\left\{\frac{1}{\alpha-1}\log\int e^{(\alpha-1)g}dQ-\frac{1}{\alpha}\log\int e^{\alpha g}dP\right\}\\
\leq &\sup_{\phi\in\Phi}\left\{\frac{1}{\alpha-1}\log\int e^{(\alpha-1)\phi}dQ-\frac{1}{\alpha}\log\int e^{\alpha \phi}dP\right\}\,.\notag
\end{align}
Using Theorem \ref{eq:Renyi_var} with $\Gamma=\Lip_b(\Omega)$ we see that the left hand side of \eqref{eq:Phi_bound} equals $R_\alpha(Q\|P)$. Theorem   \ref{thm:gen_DV} with $\Gamma=\mathcal{M}(\Omega)$ implies that the right hand side of \eqref{eq:Phi_bound} is bounded above by $R_\alpha(Q\|P)$. This proves the claim.
\end{proof}

\begin{proof}[Proof of Theorem \ref{thm:consistency}]
Compactness of $\Theta_k$ and continuity of $\phi_k$ in $\theta$ implies   $\widehat{R}_\alpha^{n,k}(Q\|P)$ are real-valued and measurable. For  $k\in\mathbb{Z}^+$ define
\begin{align}
    R_\alpha^k(Q\|P)\equiv \sup_{\theta\in \Theta_k}\left\{\frac{1}{\alpha-1}\log\int e^{(\alpha-1)\phi_{k,\theta}}dQ-\frac{1}{\alpha}\log\int e^{\alpha \phi_{k,\theta}}dP\right\}\,.
\end{align}
By using the bound
\begin{align}
&\left|R_\alpha^k(Q\|P)-\widehat{R}_\alpha^{n,k}(Q\|P)\right|\\
\leq&\frac{1}{|\alpha-1|}\sup_{\theta\in \Theta_k}\left|\log\left[\int e^{(\alpha-1)\phi_{k,\theta}}dQ\right]-\log\left[\frac{1}{n}\sum_{i=1}^n e^{(\alpha-1)\phi_k(X_i,\theta)}\right]\right|\notag\\
&+\frac{1}{|\alpha|}\sup_{\theta\in \Theta_k} \left|\log\left[\int e^{\alpha \phi_{k,\theta}}dP\right]-\log\left[\frac{1}{n}\sum_{i=1}^n e^{\alpha \phi_k(Y_i,\theta)}\right]\right|\,,\notag
\end{align}
together with the facts that
\begin{align}
\int e^{(\alpha-1)\phi_{k,\theta}}dQ\geq \int e^{-|\alpha-1|\psi_k}dQ\,,\,\,\,\,\int e^{\alpha\phi_{k,\theta}}dP\geq \int e^{-|\alpha|\psi_k}dP,\,\,\,\theta\in\Theta_k\,,
\end{align}
and $\log$ is $1/c$-Lipschitz on $[c,\infty)$ for all $c>0$, we can compute the following for all  $\eta>0$:
\begin{align}\label{eq:prob_bound}
&\left\{\left|R_\alpha^k(Q\|P)-\widehat{R}_\alpha^{n,k}(Q\|P)\right|\geq\eta\right\}\\
\subset&\left\{\sup_{\theta\in \Theta_k}\left|\log\left[\int e^{(\alpha-1)\phi_{k,\theta}}dQ\right]-\log\left[\frac{1}{n}\sum_{i=1}^n e^{(\alpha-1)\phi_k(X_i,\theta)}\right]\right|\geq|\alpha-1|\eta/2 \right.\notag\\
&\left.\hspace{1cm} \text{ and }\sup_{\theta\in\Theta_k}\left|\frac{1}{n}\sum_{i=1}^n e^{(\alpha-1)\phi_k(X_i,\theta)}-\int e^{(\alpha-1)\phi_{k,\theta}}dQ\right|\leq E_Q[e^{-|\alpha-1|\psi_k}]/2\right\}\notag\\
&\cup\left\{\sup_{\theta\in\Theta_k}\left|\frac{1}{n}\sum_{i=1}^n e^{(\alpha-1)\phi_k(X_i,\theta)}-\int e^{(\alpha-1)\phi_{k,\theta}}dQ\right|> E_Q[e^{-|\alpha-1|\psi_k}]/2\right\}\notag\\
&\cup\left\{\sup_{\theta\in \Theta_k} \left|\log\left[\int e^{\alpha \phi_{k,\theta}}dP\right]-\log\left[\frac{1}{n}\sum_{i=1}^n e^{\alpha \phi_k(Y_i,\theta)}\right]\right|\geq |\alpha|\eta/2\right.\notag\\
&\left.\hspace{1cm}\text{ and }\sup_{\theta\in\Theta_k}\left|\frac{1}{n}\sum_{i=1}^n e^{\alpha \phi_k(Y_i,\theta)}-\int e^{\alpha \phi_{k,\theta}}dP \right|\leq E_P[e^{-|\alpha|\psi_k}]/2 \right\}\notag\\
&\cup\left\{\sup_{\theta\in\Theta_k}\left|\frac{1}{n}\sum_{i=1}^n e^{\alpha \phi_k(Y_i,\theta)}-\int e^{\alpha \phi_{k,\theta}}dP \right|> E_P[e^{-|\alpha|\psi_k}]/2\right\}\notag\\
\subset&\left\{\sup_{\theta\in \Theta_k}\left|\frac{1}{n}\sum_{i=1}^n (e^{(\alpha-1)\phi_k(X_i,\theta)}-E_{\mathbb{P}}[e^{(\alpha-1)\phi_k(X_i,\theta)}])\right|\geq \epsilon_1\right\}\notag\\
&\cup\left\{\sup_{\theta\in \Theta_k} \left|\frac{1}{n}\sum_{i=1}^n( e^{\alpha \phi_k(Y_i,\theta)}- E_{\mathbb{P}}[e^{\alpha\phi_k(Y_i,\theta)}])\right|\geq\epsilon_2\right\}\,,\notag\\
\epsilon_1&\equiv \min\{|\alpha-1|\eta E_Q[e^{-|\alpha-1|\psi_k}]/4, E_Q[e^{-|\alpha-1|\psi_k}]/2\}\,,\notag\\
\epsilon_2&\equiv \min\{ |\alpha|\eta E_P[e^{-|\alpha|\psi_k}]/4 , E_P[e^{-|\alpha|\psi_k}]/2\}\,.\notag
\end{align}
For all $\theta\in\Theta_k$ we have $|e^{(\alpha-1)\phi_k(X_i,\theta)}|\leq e^{|\alpha-1|\psi_k(X_i)}\in L^1(\mathbb{P})$ and $|e^{\alpha\phi_k(Y_i,\theta)}|\leq e^{|\alpha|\psi_k(Y_i)}\in L^1(\mathbb{P})$, therefore the uniform law of large numbers (see Lemma 3.10 in \cite{geer2000empirical}) implies convergence in probability:
\begin{align}
&\lim_{n\to\infty}\mathbb{P}\left(\sup_{\theta\in\Theta_k}\left|n^{-1}\sum_{i=1}^n\left(e^{(\alpha-1)\phi_k(X_i,\theta)}-E_{\mathbb{P}}\left[e^{(\alpha-1)\phi_k(X_i,\theta)}\right]\right)\right|\geq\epsilon\right)=0\,,\\
&\lim_{n\to\infty}\mathbb{P}\left(\sup_{\theta\in\Theta_k}\left|n^{-1}\sum_{i=1}^n\left(e^{\alpha\phi_k(Y_i,\theta)}-E_{\mathbb{P}}\left[e^{\alpha\phi_k(Y_i,\theta)}\right]\right)\right|\geq\epsilon\right)=0\notag
\end{align}
for all $\epsilon>0$.  Combined with \req{eq:prob_bound} this implies
\begin{align}\label{eq:Rk_hatRk_lim}
&\lim_{n\to\infty}\mathbb{P}\left(\left|R_\alpha^k(Q\|P)-\widehat{R}_\alpha^{n,k}(Q\|P)\right|\geq\eta\right)=0\,.
\end{align}
 To finish, consider the following two cases.
 \begin{enumerate}
     \item  $R_\alpha(Q\|P)<\infty$: Fix $\delta>0$. The assumption \eqref{eq:k_lim_assump}  implies that there exists $K$ such that for $k\geq K$ we have $R_\alpha(Q\|P)-\delta/2\leq R_\alpha^k(Q\|P)\leq R_\alpha(Q\|P)$. Hence, for $k\geq K$, \req{eq:Rk_hatRk_lim} implies
\begin{align}
&\mathbb{P}(|R_\alpha(Q\|P)-\widehat{R}_\alpha^{n,k}(Q\|P)|\geq \delta)\leq \mathbb{P}(|R_\alpha^k(Q\|P)- \widehat{R}_\alpha^{n,k}(Q\|P)|\geq \delta/2)\to 0
\end{align}
as $n\to\infty$. This proves the claimed result  when $R_\alpha(Q\|P)<\infty$.

\item $R_\alpha(Q\|P)=\infty$: Fix  $M>0$ and $\delta>0$. The assumption \eqref{eq:k_lim_assump}  implies that  there exists $K$ such that for all  $k\geq K$ we have
\begin{align}
R_\alpha^k(Q\|P)\equiv\sup_{\theta\in \Theta_k}\left\{\frac{1}{\alpha-1}\log\int e^{(\alpha-1)\phi_{k,\theta}}dQ-\frac{1}{\alpha}\log\int e^{\alpha \phi_{k,\theta}}dP\right\}\geq M+\delta\,.
\end{align}
Hence for $k\geq K$ we can use \eqref{eq:Rk_hatRk_lim} to obtain
\begin{align}
&\mathbb{P}(\widehat{R}_\alpha^{n,k}(Q\|P)\leq M)\leq
\mathbb{P}\left(|R_\alpha^k(Q\|P)-\widehat{R}_\alpha^{n,k}(Q\|P)|\geq \delta\right)\to 0
\end{align}
  as $n\to\infty$. This proves the claimed result when $R_\alpha(Q\|P)=\infty$.
\end{enumerate}
\end{proof}

\subsection{Applying Theorem \ref{thm:consistency} to Several Classes of Neural Networks}\label{sec:NN_cons_proofs}

Here we prove consistency of the neural network estimators that were discussed in Section  \ref{sec:NN_estimation}. Specifically, we show  they satisfy all of the properties required to apply Theorem \ref{thm:consistency}.
\begin{enumerate}
    \item Measures with compact support: Let $\Omega\subset\mathbb{R}^m$ be compact, $\Phi$ be a family of neural networks that satisfy the universal approximation property \eqref{eq:univ_approx_K}, and let $\Phi_k\subset\Phi$ be the set of  networks with  depth and width bounded by $k$ and parameter values restricted to  $[-a_k,a_k]$, where $a_k\nearrow\infty$. Let $\Psi$ be the set of positive constants. Then  the assumptions of Theorem \ref{thm:consistency} are satisfied and hence the estimator \eqref{eq:renyi_est_def} is consistent.  
    
    \begin{proof}
        To see this, first note that compactness of $\Omega$ implies that every $\phi\in \Phi$ is bounded and so property 1 of Definition \ref{def:Psi_bounded_approx} is trivial.  Property 2 of Definition \ref{def:Psi_bounded_approx} easily follows from the universal approximation property \eqref{eq:univ_approx_K} applied to the compact set $\Omega$.  Therefore $\Phi$ has the  $\Psi$-bounded $L^\infty$ approximation property.  Assumptions 1b and 1c of Lemma \ref{lemma:Renyi_var_Phi} are trivial, as $\psi\in\Psi$ are bounded, and so we have \eqref{eq:var_formula_Phi}.  \req{eq:k_lim_assump} then follows from the fact that $\Phi_k$ increase to $\Phi$.  The remaining items 2 - 5 in Assumption \ref{assump:Phi_k}  then  follow from compactness of $K$ and boundedness of $\psi\in\Psi$.
            \end{proof}

    \item Non-compact support, bounded Lipschitz activation functions:    Let $\Omega=\mathbb{R}^m$ and $\Phi$ be the family of neural networks with 2 hidden layers, arbitrary width, and activation function $\sigma:\mathbb{R}\to\mathbb{R}$. Let $\Phi_k\subset\Phi$ be the set of width-$k$ networks with parameter values restricted to $[-a_k,a_k]$, where $a_k\nearrow\infty$, and let $\Psi$ be the set of positive constants.     If the activation function, $\sigma$,  is bounded and there exists  $(c,d)\subset\mathbb{R}$ on which $\sigma$ is one-to-one and Lipschitz  then  the estimator \eqref{eq:renyi_est_def} is consistent.

    \begin{proof}
        To prove this, first note that boundedness of $\sigma$ implies boundedness of every $\phi\in \Phi$. Therefore 1 of Definition \ref{def:Psi_bounded_approx} holds.  For any $g\in\Lip_b(\mathbb{R}^m)$ we can find $b\in\mathbb{R}$, $a> 0$ such that the range of $(g-b)/a$ is contained in $(c,d)$, and hence  $\sigma^{-1}((g-b)/a)$ is well-defined and continuous. Let $L$ be the Lipschitz constant for $\sigma$ on $(c,d)$  and define $\psi\equiv \max\{|b|+a\|\sigma\|_\infty,\|g\|_\infty\}\in\Psi$. Then $|g|\leq\psi$ and, by the universal approximation property in \cite{pinkus_1999}, for any compact $K\subset\mathbb{R}^m$ and any $\epsilon>0$ there exists a network with one hidden layer, $\phi_\epsilon$, that satisfies
    \begin{align}
    \sup_{K}|  \sigma^{-1}((g-b)/a)-\phi_\epsilon|\leq\epsilon/(aL)\,.  
    \end{align}
Therefore
       \begin{align}
       \sup_{K}|g-(b+a\sigma (\phi_\epsilon))|
       \leq aL\sup_K|\sigma^{-1}((g-b)/a)-\phi_\epsilon|
       \leq  \epsilon\,.
    \end{align}
        We have $b+a\sigma(\phi_\epsilon)\in\Phi$ (as we have simply added a second hidden layer to the network $\phi_\epsilon$) and $|b+a\sigma(\phi_\epsilon)|\leq\psi$. This completes the proof of the $\Psi$-bounded $L^\infty$ approximation property.  Properties 1b and 1c of Lemma  \ref{lemma:Renyi_var_Phi} are trivial and so we can conclude \eqref{eq:var_formula_Phi}.   The sets $\Phi_k$ increase to $\Phi$ and so, combined with \eqref{eq:var_formula_Phi}, we can conclude \eqref{eq:k_lim_assump}. The remaining items in Assumption \ref{assump:Phi_k}  hold due to boundedness of $\psi\in\Psi$, uniform boundedness of the parameter values for $\phi\in\Phi_k$, and boundedness of the activation function.
    \end{proof}

    \item Non-compact support, unbounded Lipschitz activation function: Let $p\in(1,\infty)$ and $\Omega=\mathbb{R}^m$, equipped with the $\ell^p$-norm.  Let $Q$ and $P$ be probability measures on $\Omega$ that have finite moment generating functions everywhere and have densities $dQ/dx$ and $dP/dx$ that are  bounded on compact sets. Define  $\Phi$ be the family of neural networks obtained by using either the ReLU activation function or the GroupSort activation with group size 2 (see \cite{pmlr-v97-anil19a}). Let $\Phi_k\subset\Phi$ be the set of networks with depth and width bounded by $k$ (for ReLU, one can alternatively use networks with depth equal to $3$) and with parameter values restricted to  $[-a_k,a_k]$, where $a_k\nearrow\infty$. Finally, let $\Psi=\{x\mapsto a\|x\|+b:a,b\geq 0\}$, where $\|\cdot\|$ denotes the $\ell^p$-norm.    Then the assumptions of Theorem \ref{thm:consistency} are satisfied, and hence the estimator \eqref{eq:renyi_est_def} is consistent. 
    \begin{proof}
\begin{enumerate}
\item First consider the case of ReLU activation functions. We will show that $\Phi$ has the $\Psi$-bounded $L^\infty$ approximation property (Definition \ref{def:Psi_bounded_approx}). First, all $\phi\in\Phi$ are Lipschitz, hence are bounded by $x\mapsto a\|x\|+b$ for some $a,b\geq 0$. Next, fix $g\in\Lip_b(\Omega)$.  By the universal approximation property \eqref{eq:univ_approx_K} (see \cite{Cybenko1989,pinkus_1999}), for all compact $K\subset\Omega$ and all $\epsilon>0$ there exists a neural network, $\phi_{\epsilon,K}$, with one hidden layer and ReLU activation that satisfies
\begin{align}
\sup_{x\in K}|g(x)-\phi_{\epsilon,K}(x)|<\min\{\epsilon,1\}\,.
\end{align}
Define $\psi=\|g\|_\infty+1\in\Psi$ and
\begin{align}\label{eq:phi_tilde_ReLU}
\widetilde\phi_{\epsilon,K}=\ReLU(-\ReLU(\|g\|_\infty+1-\phi_{\epsilon,K})+2(\|g\|_\infty+1))-(\|g\|_\infty+1)\,.
\end{align}
Note that $\widetilde\phi_{\epsilon,K}\in \Phi$ has depth equal to $3$.  We have  $|g|\leq\psi$, $|\widetilde\phi_{\epsilon,K}|\leq \psi$, and 
\begin{align}
\sup_{x\in K}|g(x)-\widetilde\phi_{\epsilon,K}(x)|=\sup_{x\in K}|g(x)-\phi_{\epsilon,K}(x)|<\epsilon\,.
\end{align}
This proves the $\Psi$-bounded $L^\infty$ approximation property. Properties 1b - 1c of Lemma \ref{lemma:Renyi_var_Phi} follow from the assumption that $Q$ and $P$ have finite moment generating functions everywhere. Therefore we  conclude \eqref{eq:var_formula_Phi}.  Items 1 and 2 in Assumption \ref{assump:Phi_k}  follows from the definition of $\Phi_k$, as in the previous cases. Item 3 follows from the fact that the activation  is Lipschitz and the network parameters, depth, and width of $\phi\in\Phi_k$ are uniformly bounded. Finally, 4 and 5 are implied by  1b and 1c from Lemma \ref{lemma:Renyi_var_Phi}, which were shown above. 

\item        Finally, we  consider the GroupSort case. We start by showing the $\Psi$-bounded $L^p(\mathcal{Q})$ approximation property (Definition \ref{def:Psi_bounded_L1_approx}), where $\mathcal{Q}=\{Q,P\}$. Item 1 of Definition \ref{def:Psi_bounded_L1_approx} follows from the fact that every $\phi\in\Phi$ is $L$-Lipschitz for some $L\geq 0$ and hence   $|\phi(x)|\leq L\|x\|+|\phi(0)|\in\Psi$.  Let $g\in \Lip_b(\Omega)$ with Lipschitz constant $L$ and define $\psi(x)=L\|x\|+\|g\|_\infty+L+1$. Then $\psi\in\Psi$, $|g|\leq \psi$, and, using Theorem 3 in \cite{pmlr-v97-anil19a}, we see that for any compact $K\subset\mathbb{R}^m$ and any $j\in\mathbb{Z}^+$ there exists $\phi_j\in\Phi$ that is $L$-Lipschitz and satisfies
    \begin{align}
        \left(\int_{K\cup \overline{B_1(0)}} |\phi_j-g|^pdx\right)^{1/p}\leq 1/j\,,
        \end{align}
i.e., $\phi_j$ converges to $g$ in $L^p(K\cup \overline{B_1(0)},dx)$ ($\overline{B_1(0)}$ denotes the closed ball of $\ell^p$-radius $1$ centered at $0$). Take a subsequence $\phi_{j_i}$ that converges to $g$ a.e. on $K\cup \overline{B_1(0)}$. In particular, there exists $x_0$ with $\|x_0\|\leq 1$ and $\phi_{j_i}(x_0)\to g(x_0)$.    Hence for $x\in\mathbb{R}^m$ we have
\begin{align}
    |\phi_{j_i}(x)|\leq& |\phi_{j_i}(x)-\phi_{j_i}(x_0)|+|\phi_{j_i}(x_0)-g(x_0)|+|g(x_0)|\\
    \leq& L\|x\| +L+ \|g\|_\infty +|\phi_{j_i}(x_0)-g(x_0)|\,.\notag
    \end{align}
    Therefore, if $\epsilon>0$ then for all $i$ sufficiently large we have $|\phi_{j_i}|\leq \psi$ and
    \begin{align}
        &\sup_{\mu\in \mathcal{Q}}\left(\int_K|g-\phi_{j_i}|^pd\mu\right)^{1/p}\\
        \leq& \max\{\sup_K|dQ/dx|,\sup_K|dP/dx|\}^{1/p}\left(\int_K|g-\phi_{j_i}|^pdx\right)^{1/p}<\epsilon\,.\notag
    \end{align}
    This proves the  $\Psi$-bounded $L^p(\mathcal{Q})$-approximation property. Properties 2b - 2c of Lemma \ref{lemma:Renyi_var_Phi} follow from the assumption that $Q$ and $P$ have finite moment generating functions everywhere. Therefore we conclude \eqref{eq:var_formula_Phi}. Items 1 and 2 in Assumption \ref{assump:Phi_k}  follows from the definition of $\Phi_k$, as in the previous cases. Item 3 follows from the fact that the activation function is Lipschitz and the network parameters, depth, and width of $\phi\in\Phi_k$ are uniformly bounded. Finally, 4 and 5 are implied by  2b and 2c from Lemma \ref{lemma:Renyi_var_Phi}, which were shown above. 
\end{enumerate}
    \end{proof}
\end{enumerate}
\begin{remark}
For ReLU activation, our proof shows that neural networks with 3 hidden layers are sufficient to obtain consistency of the estimator; see \req{eq:phi_tilde_ReLU}.  To the best of the authors' knowledge, it is an open question as to whether the $\Psi$-bounded $L^\infty$ approximation property (or the  $\Psi$-bounded $L^p(\mathcal{Q})$ approximation property) holds for  networks with only one or two hidden layers. If so then consistency for one or two layer networks would follow by the same argument as above. The numerical results in Section \ref{sec:numerical} do suggest that shallow networks yield consistent estimators.
\end{remark}
\begin{remark}
In the case of the GroupSort activation, it was crucial that the variational formula from Theorem \ref{thm:gen_DV} holds when the optimization is performed over the space $\Gamma=\Lip_b(\Omega)$. The required uniform  bounds on the sequence of approximating functions by a fixed $\psi\in\Psi$ would not hold without the Lipschitz restriction.
\end{remark}

\subsection{Complexity Proof}\label{sec:complexity}
Here we will derive   the complexity result \eqref{eq:n_lower_bound} for R{\'e}nyi divergence estimation; we use the same notation as in Theorem \ref{thm:consistency}.  Specifically, we derive finite sample bounds on how well the estimator $ \widehat{R}_\alpha^{n,k}(Q\|P)$ (see \req{eq:estimator_k}) approximates
\begin{align}
R^k_\alpha(Q\|P)\equiv\sup_{\theta\in\Theta_k}\left\{\frac{1}{\alpha-1}\log\left[\int  e^{(\alpha-1)\phi_{k,\theta}}dQ\right]-\frac{1}{\alpha}\log\left[\int e^{\alpha \phi_{k,\theta}}dP\right]\right\}\,.
\end{align}
One should compare this with the corresponding result for KL divergence estimation, Theorem 6 in \cite{MINE_paper}, which has the same qualitative behavior in $\epsilon$, $\delta$, and $d_k$.

\begin{theorem}\label{thm:complexity}
Let $\alpha\in \mathbb{R}\setminus\{0,1\}$,  $\Theta_k\subset\mathbb{R}^{d_k}\cap\{\theta:\|\theta\|\leq K_k\}$, and suppose $\phi_{k,\theta}$ is bounded by $M_k$ and is $L_k$-Lipschitz in $\theta\in\Theta_k$.  Then for all $\epsilon>0$, $\delta>0$ we have
\begin{align}
\mathbb{P}\left(|\widehat{R}_\alpha^{n,k}(Q\|P)-R^k_\alpha(Q\|P)|>\epsilon\right)\leq \delta
\end{align}
whenever
\begin{align}\label{eq:n_complexity_bound}
n\geq \frac{32 D_{\alpha,k}^2}{\epsilon^2}\left(d_k\log(16L_kK_k\sqrt{d_k}/\epsilon)+2d_kM_k\max\{|\alpha|,|\alpha-1|\}+\log(4/\delta)\right)\,,
\end{align}
where $D_{\alpha,k}\equiv\max\{e^{2|\alpha|M_k}/|\alpha|,e^{2|\alpha-1|M_k}/|\alpha-1|\}$. 
\end{theorem}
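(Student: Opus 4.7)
The plan is to reduce the estimation error to a uniform empirical-process deviation over $\theta\in\Theta_k$ and then control that deviation by combining Hoeffding's inequality with a covering number bound, following the method of Theorem 3 of \cite{MINE_paper} (with the added wrinkle of two measures $Q,P$ and two $\alpha$-dependent exponents). Writing
\[
F_k(\theta;\mu,\nu)\equiv \frac{1}{\alpha-1}\log\!\left[\int e^{(\alpha-1)\phi_{k,\theta}}d\mu\right]-\frac{1}{\alpha}\log\!\left[\int e^{\alpha\phi_{k,\theta}}d\nu\right],
\]
so that $R_\alpha^k(Q\|P)=\sup_\theta F_k(\theta;Q,P)$ and $\widehat{R}_\alpha^{n,k}(Q\|P)=\sup_\theta F_k(\theta;Q_n,P_n)$, the elementary inequality $|\sup f-\sup g|\le\sup|f-g|$ reduces the problem to bounding $\sup_{\theta\in\Theta_k}|F_k(\theta;Q_n,P_n)-F_k(\theta;Q,P)|$.

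The next step is to linearize the logarithms. Since $|\phi_{k,\theta}|\le M_k$, the integrands $e^{(\alpha-1)\phi_{k,\theta}}$ and $e^{\alpha\phi_{k,\theta}}$ lie in $[e^{-|\alpha-1|M_k},e^{|\alpha-1|M_k}]$ and $[e^{-|\alpha|M_k},e^{|\alpha|M_k}]$ respectively, intervals on which $\log$ is Lipschitz with constants $e^{|\alpha-1|M_k}$ and $e^{|\alpha|M_k}$. A triangle inequality then gives
\[
|F_k(\theta;Q_n,P_n)-F_k(\theta;Q,P)|\le \frac{e^{|\alpha-1|M_k}}{|\alpha-1|}|\Delta_Q(\theta)|+\frac{e^{|\alpha|M_k}}{|\alpha|}|\Delta_P(\theta)|,
\]
where $\Delta_Q(\theta)\equiv(Q_n-Q)(e^{(\alpha-1)\phi_{k,\theta}})$ and $\Delta_P(\theta)\equiv(P_n-P)(e^{\alpha\phi_{k,\theta}})$. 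For any fixed $\theta$, Hoeffding's inequality applied to bounded iid variables in an interval of width $2e^{|\alpha-1|M_k}$ (resp.\ $2e^{|\alpha|M_k}$) yields $\mathbb{P}(|\Delta_Q(\theta)|\ge s)\le 2\exp(-ns^2/(2e^{2|\alpha-1|M_k}))$, and analogously for $\Delta_P$. Choosing $s=|\alpha-1|\epsilon/(4e^{|\alpha-1|M_k})$ (and the analogous choice for $P$) ensures that each of the two terms contributes at most $\epsilon/4$ to $|F_k(\theta;Q_n,P_n)-F_k(\theta;Q,P)|$, producing concentration with exponent $-n\epsilon^2/(32D_{\alpha,k}^2)$, which is exactly the constant $32$ and the factor $D_{\alpha,k}^2$ appearing in \req{eq:n_complexity_bound}.

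To upgrade this pointwise bound to one uniform in $\theta$, I would use Lipschitz continuity in the parameter. The bounds $|\phi_{k,\theta_1}-\phi_{k,\theta_2}|\le L_k\|\theta_1-\theta_2\|$ and $|\phi_{k,\theta}|\le M_k$ together imply that $\theta\mapsto e^{c\phi_{k,\theta}(x)}$ is $|c|L_ke^{|c|M_k}$-Lipschitz uniformly in $x$, so after dividing by $c$ and composing with $\log$ both $\theta\mapsto F_k(\theta;Q,P)$ and $\theta\mapsto F_k(\theta;Q_n,P_n)$ are Lipschitz with a common constant of order $L_ke^{2M_k\max\{|\alpha|,|\alpha-1|\}}$. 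Since $\Theta_k\subset\{\theta\in\mathbb{R}^{d_k}:\|\theta\|\le K_k\}$, a standard volume argument produces a Euclidean $\eta$-net $\mathcal{N}$ of cardinality at most $(3K_k\sqrt{d_k}/\eta)^{d_k}$ (the $\sqrt{d_k}$ arising from passing through an $\ell^\infty$ grid to get an $\ell^2$ net). Choosing $\eta$ proportional to $\epsilon/(L_ke^{2M_k\max\{|\alpha|,|\alpha-1|\}})$ so that the Lipschitz discretization error is at most $\epsilon/2$, applying Hoeffding at each point of $\mathcal{N}$, and taking a union bound yields
\[
\mathbb{P}\!\left(\sup_\theta|F_k(\theta;Q_n,P_n)-F_k(\theta;Q,P)|>\epsilon\right)\le 4|\mathcal{N}|\exp\!\left(-\frac{n\epsilon^2}{32D_{\alpha,k}^2}\right).
\]
Requiring this to be at most $\delta$ and solving for $n$ produces \req{eq:n_complexity_bound}: the $d_k\log(16L_kK_k\sqrt{d_k}/\epsilon)$ term is the polynomial part of $\log|\mathcal{N}|$, while the $2d_kM_k\max\{|\alpha|,|\alpha-1|\}$ term is exactly $d_k$ times the exponential contribution to $\log\eta^{-1}$.

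The main obstacle is careful bookkeeping of the constants, in particular tracking how the exponential factors in $M_k$ propagate simultaneously into the Lipschitz constant of $F_k$ (hence $\eta$, hence $|\mathcal{N}|$) and into the Hoeffding concentration exponent (hence $D_{\alpha,k}^2$), so that the resulting bound agrees exactly with \req{eq:n_complexity_bound}. There is no additional conceptual obstruction beyond the KL argument of \cite{MINE_paper}; only the added accounting for two empirical means and the two $\alpha$-dependent exponents appearing in \eqref{eq:estimator_k}.
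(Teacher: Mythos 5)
Your proposal follows essentially the same route as the paper's proof: reduce to a uniform deviation via $|\sup f-\sup g|\le\sup|f-g|$, linearize the logarithms using their Lipschitz constants $e^{|\alpha-1|M_k}$ and $e^{|\alpha|M_k}$, apply Hoeffding at the points of an $\eta$-net of $\Theta_k$ with $\eta\sim \epsilon e^{-2M_k\max\{|\alpha|,|\alpha-1|\}}/L_k$, and finish with a union bound, arriving at the same tail $4N_\eta(\Theta_k)\exp\left(-n\epsilon^2/(32D_{\alpha,k}^2)\right)$. The only discrepancy is cosmetic: the paper uses the covering-number bound $N_\eta(\Theta_k)\le (2K_k\sqrt{d_k}/\eta)^{d_k}$ with $\eta=\frac{\epsilon}{8L_k}e^{-2M_k\max\{|\alpha|,|\alpha-1|\}}$, which is what produces the exact constant $16$ inside the logarithm in the sample-size bound, whereas your $(3K_k\sqrt{d_k}/\eta)^{d_k}$ would merely shift that constant.
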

\begin{proof}
First note that
\begin{align}\label{eq:R_hat_bound1}
&|\widehat{R}_\alpha^{n,k}(Q\|P)-R^k_\alpha(Q\|P)|\\
\leq&\sup_{\theta\in\Theta_k}\left|\frac{1}{\alpha-1}\log\left[\int  e^{(\alpha-1)\phi_{k,\theta}}dQ\right]-\frac{1}{\alpha-1}\log\left[\int  e^{(\alpha-1)\phi_{k,\theta}}dQ_n\right]\right|\notag\\
&+\sup_{\theta\in\Theta_k}\left|\frac{1}{\alpha}\log\left[\int  e^{\alpha\phi_{k,\theta}}dP\right]-\frac{1}{\alpha}\log\left[\int  e^{\alpha\phi_{k,\theta}}dP_n\right]\right|\,.\notag
\end{align}
Given $\eta>0$, take an open cover $B_\eta(\theta_j)$ of $\Theta_k$. It is known that the minimal covering number satisfies \cite{shalev2014understanding}
\begin{align}
N_\eta(\Theta_k)\leq \left(\frac{2K_k\sqrt{d_k}}{\eta}\right)^{d_k}\,.
\end{align}
We let $\eta=\frac{\epsilon}{8L_k}e^{-2M_k\max\{|\alpha|,|\alpha-1|\}}$. For $\theta\in B_\eta(\theta_j)$ we can use the triangle inequality, the uniform bound on $\phi_{k,\theta}$, and the Lipschitz bounds on $\log$, $\exp$, and $\theta\mapsto\phi_{k,\theta}$ to compute
\begin{align}
&\left|\frac{1}{\alpha-1}\log\left[\int  e^{(\alpha-1)\phi_{k,\theta}}dQ\right]-\frac{1}{\alpha-1}\log\left[\int  e^{(\alpha-1)\phi_{k,\theta}}dQ_n\right]\right|\\
\leq&\left|\frac{1}{\alpha-1}\log\left[\int  e^{(\alpha-1)\phi_{k,\theta}}dQ\right]-\frac{1}{\alpha-1}\log\left[\int  e^{(\alpha-1)\phi_{k,\theta_j}}dQ\right]\right|\notag\\
&+\left|\frac{1}{\alpha-1}\log\left[\int  e^{(\alpha-1)\phi_{k,\theta_j}}dQ\right]-\frac{1}{\alpha-1}\log\left[\int  e^{(\alpha-1)\phi_{k,\theta_j}}dQ_n\right]\right|\notag\\
&+\left|\frac{1}{\alpha-1}\log\left[\int  e^{(\alpha-1)\phi_{k,\theta_j}}dQ_n\right]-\frac{1}{\alpha-1}\log\left[\int  e^{(\alpha-1)\phi_{k,\theta}}dQ_n\right]\right|\notag\\
\leq&2L_ke^{2|\alpha-1|M_k}\eta+\left|\frac{1}{\alpha-1}\log\left[\int  e^{(\alpha-1)\phi_{k,\theta_j}}dQ\right]-\frac{1}{\alpha-1}\log\left[\int  e^{(\alpha-1)\phi_{k,\theta_j}}dQ_n\right]\right|\notag\\
\leq&\frac{\epsilon}{4}+\left|\frac{1}{\alpha-1}\log\left[\int  e^{(\alpha-1)\phi_{k,\theta_j}}dQ\right]-\frac{1}{\alpha-1}\log\left[\int  e^{(\alpha-1)\phi_{k,\theta_j}}dQ_n\right]\right|\,.\notag
\end{align}
A similar bound applies to the second term on the right hand side of \eqref{eq:R_hat_bound1}.   Using a union bound and a Lipschitz bound we have
\begin{align}
&\mathbb{P}\left(\max_{j=1,...,N_\eta(\Theta_k)}\left|\frac{1}{\alpha-1}\log\left[\int e^{(\alpha-1)\phi_{k,\theta_j}}dQ\right]-\frac{1}{\alpha-1}\log\left[\int e^{(\alpha-1)\phi_{k,\theta_j}}dQ_n\right]\right|>\epsilon/4\right)\\
\leq&\sum_j \mathbb{P}\left(\frac{1}{|\alpha-1|} e^{|\alpha-1|M_k}\left|\int e^{(\alpha-1)\phi_{k,\theta_j}}dQ-\int e^{(\alpha-1)\phi_{k,\theta_j}} dQ_n\right|>\epsilon/4\right)\,.
\end{align}
Using Hoeffding's inequality we can bound
\begin{align}
\mathbb{P}\left(\left|\int e^{(\alpha-1)\phi_{k,\theta_j}}dQ-\int e^{(\alpha-1)\phi_{k,\theta_j}} dQ_n\right|>c\right)\leq 2\exp\left(-\frac{n c^2}{2\exp(2|\alpha-1|M_k)}\right)
\end{align}
for all $c>0$ and all $j$, hence
\begin{align}
&\mathbb{P}\left(\max_{j=1,...,N_\eta(\Theta_k)}\left|\frac{1}{\alpha-1}\log\left[\int e^{(\alpha-1)\phi_{k,\theta_j}}dQ\right]-\frac{1}{\alpha-1}\log\left[\int e^{(\alpha-1)\phi_{k,\theta_j}}dQ_n\right]\right|>\epsilon/4\right)\\
\leq & 2N_\eta(\Theta_k)\exp\left(-\frac{n\epsilon^2}{32 D_{\alpha,k}^2}\right)\,.\notag
\end{align}
Similarly, we have
\begin{align}
&\mathbb{P}\left(\max_{j=1,...,N_\eta(\Theta_k)}\left|\frac{1}{\alpha}\log\left[\int e^{\alpha\phi_{k,\theta_j}}dP\right]-\frac{1}{\alpha}\log\left[\int e^{\alpha\phi_{k,\theta_j}}dP_n\right]\right|>\epsilon/4\right)\\
\leq & 2N_\eta(\Theta_k)\exp\left(-\frac{n\epsilon^2}{32 D_{\alpha,k}^2}\right)\,.\notag
\end{align}
Combining these we can compute
\begin{align}
&\mathbb{P}\left(|\widehat{R}_\alpha^{n,k}(Q\|P)-R^k_\alpha(Q\|P)|>\epsilon\right)\\
\leq&\mathbb{P}\left(\sup_{\theta\in\Theta_k}\left|\frac{1}{\alpha-1}\log\left[\int  e^{(\alpha-1)\phi_{k,\theta}}dQ\right]-\frac{1}{\alpha-1}\log\left[\int  e^{(\alpha-1)\phi_{k,\theta}}dQ_n\right]\right|>\epsilon/2\right)\notag\\
&+\mathbb{P}\left(\sup_{\theta\in\Theta_k}\left|\frac{1}{\alpha}\log\left[\int  e^{\alpha\phi_{k,\theta}}dP\right]-\frac{1}{\alpha}\log\left[\int  e^{\alpha\phi_{k,\theta}}dP_n\right]\right|>\epsilon/2\right)\notag\\
\leq&\mathbb{P}\left(\max_j\left|\frac{1}{\alpha-1}\log\left[\int  e^{(\alpha-1)\phi_{k,\theta_j}}dQ\right]-\frac{1}{\alpha-1}\log\left[\int  e^{(\alpha-1)\phi_{k,\theta_j}}dQ_n\right]\right|>\epsilon/4\right)\notag\\
&+\mathbb{P}\left(\max_j\left|\frac{1}{\alpha}\log\left[\int  e^{\alpha\phi_{k,\theta_j}}dP\right]-\frac{1}{\alpha}\log\left[\int  e^{\alpha\phi_{k,\theta_j}}dP_n\right]\right|>\epsilon/4\right)\notag\\
\leq&4 N_\eta(\Theta_k)\exp\left(-\frac{n\epsilon^2}{32D_{\alpha,k}^2}\right)\notag\\
\leq&4 \left(\frac{2K_k\sqrt{d_k}}{\eta}\right)^{d_k}\exp\left(-\frac{n\epsilon^2}{32D_{\alpha,k}^2}\right)\,.\notag
\end{align}
Finally, it is straightforward to show that
\begin{align}
4 \left(\frac{2K_k\sqrt{d_k}}{\eta}\right)^{d_k}\exp\left(-\frac{n\epsilon^2}{32D_{\alpha,k}^2}\right)\leq \delta
\end{align}
whenever $n$ satisfies \eqref{eq:n_complexity_bound}.
\end{proof}
\begin{remark}
Though the general techniques for proving Theorem \ref{thm:complexity} are the same as those used in \cite{MINE_paper} to study KL divergence estimators, there are some technical errors in \cite{MINE_paper} that we have corrected in the above derivation.  They have minimal impact on the qualitative behavior, with the exception of the behavior in the bound $M_k$; the correct behavior of the prefactor is exponential in $M_k$ (in \cite{MINE_paper} it was stated to be $M_k^2$).  This impacts both the KL and R{\'e}nyi results. Specifically, the use of Hoeffding's inequality to obtain Eq. (49) in \cite{MINE_paper} must employ a bound on $e^{T_\theta}$ instead of a bound on $T_\theta$, hence the right hand side of that bound should read $2N_\eta(\Theta)\exp(-\frac{\epsilon^2n}{32\exp(2M)})$ (in their notation, there is no subscript $k$ on $M$, $\Theta$, etc.).  This in turn implies that the KL complexity result,  Eq. (45) in \cite{MINE_paper}, should also have a factor of $e^{2M}$ in place of $M^2$. Similar exponential behavior is also present in the result for R{\'e}nyi divergences \eqref{eq:n_complexity_bound}.
\end{remark}

\section*{Acknowledgments}
The research of J.B., M.K., and L. R.-B.  was partially supported by NSF TRIPODS  CISE-1934846.  
The research of M. K. and L. R.-B.   was partially supported by the National Science Foundation (NSF) under the grant DMS-2008970
 and by the Air Force Office of Scientific Research (AFOSR) under the grant FA-9550-18-1-0214. 
The research of P.D. was supported in part by the National Science Foundation (NSF) under the grant DMS-1904992 and by the Air Force Office of Scientific Research (AFOSR) under the grant FA-9550-18-1-0214.
The research of J.W. was partially supported by the Defense Advanced Research Projects Agency (DARPA) EQUiPS program under the grant W911NF1520122.

\bibliography{RenyiVariationalFormulaArxiv.bbl}

\end{document}